\providecommand{\abs}[1]{\left\lvert#1\right\rvert}
\providecommand{\EEb}[2]{{\mathbb E}_{#1}\left[#2\right] } %
\providecommand{\mtheta}{\boldsymbol{\theta}}
\providecommand{\mtheta}{\boldsymbol{\theta}}
\newenvironment{talign*}
{\csname align*\endcsname}
{\endalign}
\definecolor{coral}{RGB}{255,127,80}
\definecolor{darkgreen}{RGB}{0,100,0}
\definecolor{darkyellow}{RGB}{204,153,0}
\definecolor{salmon}{RGB}{250,128,114}
\definecolor{darkred}{RGB}{150,0,0}
\definecolor{lightblue}{RGB}{70,130,180}
\definecolor{darkred}{RGB}{178,34,34}
\newcommand{\secref}[1]{\hyperref[#1]{\darkredtext{Sec.~\ref*{#1}}}}
\newcommand{\thmref}[1]{\hyperref[#1]{\darkredtext{Thm.~\ref*{#1}}}}
\newcommand{\defref}[1]{\hyperref[#1]{\darkredtext{Def.~\ref*{#1}}}}
\newcommand{\propref}[1]{\hyperref[#1]{\darkredtext{Prop.~\ref*{#1}}}}
\newcommand{\assumpref}[1]{\hyperref[#1]{\darkredtext{Assump.~\ref*{#1}}}}
\newcommand{\remarkref}[1]{\hyperref[#1]{\darkredtext{Rem.~\ref*{#1}}}}
\newcommand{\hypref}[1]{\hyperref[#1]{\darkredtext{Hyp.~\ref*{#1}}}}
\newcommand{\conjref}[1]{\hyperref[#1]{\darkredtext{Conj.~\ref*{#1}}}}
\newcommand{\lemref}[1]{\hyperref[#1]{\darkredtext{Lem.~\ref*{#1}}}}
\newcommand{\corref}[1]{\hyperref[#1]{\darkredtext{Cor.~\ref*{#1}}}}
\newcommand{\noteref}[1]{\hyperref[#1]{\darkredtext{Nota.~\ref*{#1}}}}
\newcommand{\claimref}[1]{\hyperref[#1]{\darkredtext{Clm.~\ref*{#1}}}}
\newcommand{\obsref}[1]{\hyperref[#1]{\darkredtext{Obs.~\ref*{#1}}}}
\newcommand{\algref}[1]{\hyperref[#1]{\darkredtext{Alg.~\ref*{#1}}}}
\newcommand{\figref}[1]{\hyperref[#1]{\darkredtext{Fig.~\ref*{#1}}}}
\newcommand{\tabref}[1]{\hyperref[#1]{\darkredtext{Tab.~\ref*{#1}}}}
\newcommand{\appref}[1]{\hyperref[#1]{\darkredtext{App.~\ref*{#1}}}}
\newtheoremstyle{custom}
{1pt} %
{1pt} %
{\itshape} %
{} %
{\bfseries} %
{} %
{ } %
{\thmname{#1} \thmnumber{#2} \thmnote{(#3)} . } %
\theoremstyle{custom}
\newtheorem{innerdefinition}{Definition}
\newtheorem{innerproposition}{Proposition}
\newtheorem{innerassumption}{Assumption}
\newtheorem{innerremark}{Remark}
\newtheorem{innertheorem}{Theorem}
\newtheorem{innerhypothesis}{Hypothesis}
\newtheorem{innerconjecture}{Conjecture}
\newtheorem{innerlemma}{Lemma}
\newtheorem{innercorollary}{Corollary}
\newtheorem{innernotation}{Notation}
\newtheorem{innerclaim}{Claim}
\newtheorem{innerproblem}{Problem}
\newtheorem{innerobservation}{Observation}
\newmdenv[
    backgroundcolor=gray!10,
    linecolor=gray!100,
    linewidth=0.8pt,
    skipabove=2pt,
    skipbelow=2pt,
    innertopmargin=10pt,
    innerbottommargin=5pt,
    innerleftmargin=5pt,
    innerrightmargin=5pt,
]{definitionframe}
\newmdenv[
    backgroundcolor=blue!10,
    linecolor=blue!100,
    linewidth=0.8pt,
    skipabove=2pt,
    skipbelow=2pt,
    innertopmargin=10pt,
    innerbottommargin=5pt,
    innerleftmargin=5pt,
    innerrightmargin=5pt,
]{propositionframe}
\newmdenv[
    backgroundcolor=green!10,
    linecolor=green!100,
    linewidth=0.8pt,
    skipabove=2pt,
    skipbelow=2pt,
    innertopmargin=10pt,
    innerbottommargin=5pt,
    innerleftmargin=5pt,
    innerrightmargin=5pt,
]{assumptionframe}
\newmdenv[
    backgroundcolor=yellow!10,
    linecolor=yellow!100,
    linewidth=0.8pt,
    skipabove=2pt,
    skipbelow=2pt,
    innertopmargin=10pt,
    innerbottommargin=5pt,
    innerleftmargin=5pt,
    innerrightmargin=5pt,
]{remarkframe}
\newmdenv[
    backgroundcolor=red!10,
    linecolor=red!100,
    linewidth=0.8pt,
    skipabove=2pt,
    skipbelow=2pt,
    innertopmargin=10pt,
    innerbottommargin=5pt,
    innerleftmargin=5pt,
    innerrightmargin=5pt,
]{theoremframe}
\newmdenv[
    backgroundcolor=purple!10,
    linecolor=purple!100,
    linewidth=0.8pt,
    skipabove=2pt,
    skipbelow=2pt,
    innertopmargin=10pt,
    innerbottommargin=5pt,
    innerleftmargin=5pt,
    innerrightmargin=5pt,
]{hypothesisframe}
\newmdenv[
    backgroundcolor=orange!10,
    linecolor=orange!100,
    linewidth=0.8pt,
    skipabove=2pt,
    skipbelow=2pt,
    innertopmargin=10pt,
    innerbottommargin=5pt,
    innerleftmargin=5pt,
    innerrightmargin=5pt,
]{conjectureframe}
\newmdenv[
    backgroundcolor=cyan!10,
    linecolor=cyan!100,
    linewidth=0.8pt,
    skipabove=2pt,
    skipbelow=2pt,
    innertopmargin=10pt,
    innerbottommargin=5pt,
    innerleftmargin=5pt,
    innerrightmargin=5pt,
]{lemmaframe}
\newmdenv[
    backgroundcolor=magenta!10,
    linecolor=magenta!100,
    linewidth=0.8pt,
    skipabove=2pt,
    skipbelow=2pt,
    innertopmargin=10pt,
    innerbottommargin=5pt,
    innerleftmargin=5pt,
    innerrightmargin=5pt,
]{corollaryframe}
\newmdenv[
    backgroundcolor=pink!10,
    linecolor=pink!100,
    linewidth=0.8pt,
    skipabove=2pt,
    skipbelow=2pt,
    innertopmargin=10pt,
    innerbottommargin=5pt,
    innerleftmargin=5pt,
    innerrightmargin=5pt,
]{notationframe}
\newmdenv[
    backgroundcolor=violet!10,
    linecolor=violet!100,
    linewidth=0.8pt,
    skipabove=2pt,
    skipbelow=2pt,
    innertopmargin=10pt,
    innerbottommargin=5pt,
    innerleftmargin=5pt,
    innerrightmargin=5pt,
]{claimframe}
\newmdenv[
    backgroundcolor=salmon!10,
    linecolor=salmon!100,
    linewidth=0.8pt,
    skipabove=2pt,
    skipbelow=2pt,
    innertopmargin=10pt,
    innerbottommargin=5pt,
    innerleftmargin=5pt,
    innerrightmargin=5pt,
]{problemframe}
\newmdenv[
    backgroundcolor=lavender!10,
    linecolor=lavender!100,
    linewidth=0.8pt,
    skipabove=2pt,
    skipbelow=2pt,
    innertopmargin=10pt,
    innerbottommargin=5pt,
    innerleftmargin=5pt,
    innerrightmargin=5pt,
]{observationframe}
\newenvironment{definition}
{\begin{definitionframe}\begin{innerdefinition}}
            {\end{innerdefinition}\end{definitionframe}}
\newenvironment{proposition}
{\begin{propositionframe}\begin{innerproposition}}
            {\end{innerproposition}\end{propositionframe}}
\newenvironment{theorem}
{\begin{theoremframe}\begin{innertheorem}}
            {\end{innertheorem}\end{theoremframe}}
\newenvironment{lemma}
{\begin{lemmaframe}\begin{innerlemma}}
            {\end{innerlemma}\end{lemmaframe}}
\newcommand{\EBoN}{\textsc{E-BoN}\xspace} %
\newcommand{\ESMC}{\textsc{E-SMC}\xspace} %
\title{Optimizing Decoding Paths in Masked Diffusion Models by Quantifying Uncertainty}
\author{
    \textbf{Ziyu Chen}\textsuperscript{1,3,*} \quad
    \textbf{Xinbei Jiang}\textsuperscript{1,*} \quad
    \textbf{Peng Sun}\textsuperscript{2,1} \quad
    \textbf{Tao Lin}\textsuperscript{2,$\dagger$} \\[0.7ex]
    \textsuperscript{1}Zhejiang University \quad
    \textsuperscript{2}Westlake University \quad
    \textsuperscript{3}University of Chicago \\[0.7ex]
    \texttt{\{ziyu, xinbei\}@zju.edu.cn}; \\
    \texttt{\{sunpeng, lintao\}@westlake.edu.cn} \\[0.7ex]
    \href{https://github.com/LINs-lab/DenoisingEntropy}{\texttt{https://github.com/LINs-lab/DenoisingEntropy}}
}
\newcommand{\entropyState}{h_{\texttt{DE}}}
\newcommand{\entropyPath}{H_{\texttt{DE}}}
\begin{document}

\maketitle

\renewcommand{\thefootnote}{\fnsymbol{footnote}}
\footnotetext[0]{\hspace{-1.8em}$^*$Equal contribution. Work done during Ziyu's visit to Westlake University.}
\footnotetext[0]{\hspace{-1.8em}$^\dagger$Corresponding author.}

\begin{abstract}
Masked Diffusion Models (MDMs) offer flexible, non-autoregressive generation, but this freedom introduces a challenge: final output quality is highly sensitive to the decoding order.
We are the first to formalize this issue, attributing the variability in output quality to the cumulative predictive uncertainty along a generative path.
To quantify this uncertainty, we introduce Denoising Entropy, a computable metric that serves as an internal signal for evaluating generative process.
Leveraging this metric, we propose two algorithms designed to optimize the decoding path: a post-hoc selection method and a real-time guidance strategy. Experiments demonstrate that our entropy-guided methods significantly improve generation quality, consistently boosting accuracy on challenging reasoning, planning, and code benchmarks. Our work establishes Denoising Entropy as a principled tool for understanding and controlling generation, effectively turning the uncertainty in MDMs from a liability into a key advantage for discovering high-quality solutions.
\end{abstract}

\section{Introduction}
\label{sec:introduction}

In language modeling, Masked Diffusion Models (MDMs)~\citep{austin2021structured, lou2023discrete, sahoo2024simple} are rapidly emerging as powerful counterparts to the dominant Auto-Regressive Models (ARMs).
While ARMs rely on the next-token prediction paradigm and are constrained to a fixed left-to-right generation path~\citep{bengio2003neural, sutskever2014sequence}, MDMs learn to denoise sequences via random-order masking, enabling them to construct sequences in any order in principle.
Each unique sequence construction process can be viewed as a \textit{decoding path}, and different paths typically entail varying generation difficulty that shapes the final output.
The generation flexibility of MDMs thus has a theoretical potential: within the vast space of possible decoding paths, there must exist an optimal path that yields a results no worse than the single, rigid sequential path of ARMs.

However, this theoretical potential often remains untapped in practice, as MDMs always underperform compared to ARMs~\citep{feng2025theoretical}.
Beyond the model capability, decoding strategy acts as a critical bottleneck determining the path taken~\citep{kim2025train}. The default random-order strategy, for instance, treats all paths as equally probable~\citep{austin2021structured}, reducing the search for an optimal path to mere chance.
More sophisticated approaches employ greedy token-level heuristics, such as unmasking the token with the highest confidence~\citep{chang2022maskgit}, but are still myopic: a sequence of locally optimal steps provides no guarantee of a globally optimal path.
The fundamental limitation underlying these strategies is their lack of a global perspective, as they operate without a mechanism to assess the quality of the overall generative path.

We draw inspiration from uncertainty quantification in ARMs, where metrics like entropy reliably link predictive uncertainty to generative quality~\citep{xu2020understanding, kuhn2023semantic}, and uncertainty-aware decoding has shown effect for enhancing global coherence~\citep{arora2023stable, zhu2024improving}.
Adapting this insight to MDMs, we formalize \textbf{Path Uncertainty}: MDMs' cumulative predictive uncertainty along a complete decoding path, capturing the global quality of the entire generative process.
We attribute MDM quality variability to path uncertainty. High cumulative uncertainty harms output consistency, while low uncertainty indicates reliable paths~\citep{press2024entropy}.

We introduce \textbf{Denoising Entropy}, a novel and internally computable metric designed to quantify the inherent uncertainty of generative paths in MDMs.
As illustrated in Figure~\ref{fig:main}, Denoising Entropy comprises two complementary variants: State Entropy ($\entropyState$), which captures uncertainty at a single decoding state, and Path Entropy ($\entropyPath$), which integrates this uncertainty across an entire path to measure cumulative generative uncertainty.
This formulation provides a principled framework for evaluating the reliability of different generative paths.

Minimizing Denoising Entropy offers a powerful new objective for steering MDM decoding towards more reliable outcomes.
We operationalize this principle through two search algorithms: a post-hoc selection method that reranks sampled paths to find the one with the minimum Path Entropy ($\entropyPath$), and a real-time guidance method that uses State Entropy ($\entropyState$) to actively steer generation towards lower-uncertainty regions of the state space.
By shifting the focus from myopic, token-level decisions to holistic, path-level optimization, our approach consistently boosts the performance of MDMs on complex open-ended generation and reasoning tasks.
\textbf{We summarize our key findings below: }

\begin{itemize}[leftmargin=*, itemsep=0pt, topsep=0pt]
    \item \textbf{Formalizing Path Uncertainty} (Section~\ref{sec:path_uncertainty}):
          We identify and formalize Path Uncertainty, the cumulative predictive uncertainty of a MDM along a single decoding path, which drives the variability in output quality.
    \item \textbf{Denoising Entropy as Uncertainty Proxy} (Section~\ref{sec:denoising_entropy}):
          We introduce two novel and computable metrics, $\entropyState(t)$ and $\entropyPath$, as a theoretically-grounded toolkit to evaluate entire generative states and paths, providing signals for path-aware guidance.
    \item \textbf{Path Search Algorithms} (Section~\ref{sec:guided_decoding}):
          We propose two path search algorithms, Entropy-based Best-of-N and Entropy-guided Sequential Monte Carlo, that leverage Denoising Entropy as an internal signal to optimize the decoding process and improve the quality of generated sequences.
\end{itemize}

\begin{figure}[!t]
    \centering
    \includegraphics[width=1.0\textwidth]{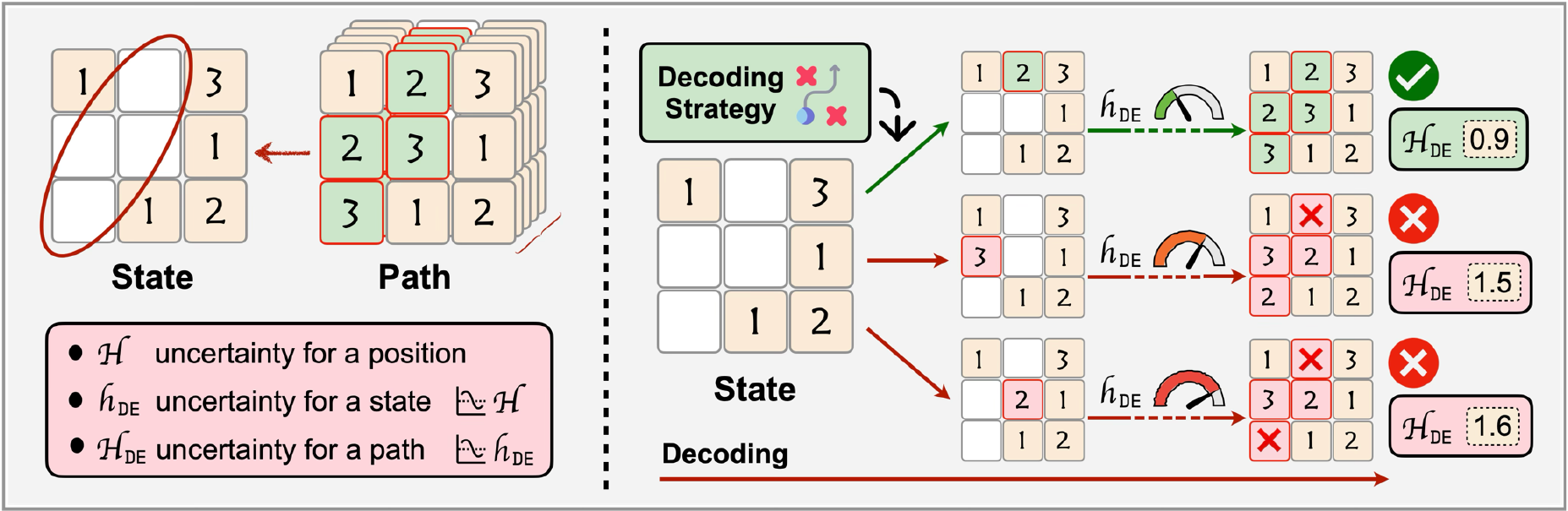}
    \caption{
        \textbf{Quantifying Path Uncertainty in MDMs with Denoising Entropy.}
        \textbf{Left}: State Entropy ($\entropyState$) measures per-state uncertainty, computed as the mean Shannon Entropy over the predictive distributions for all masked positions. $\entropyState$ is then aggregated over the entire path to form the Path Entropy ($\entropyPath$).
        \textbf{Right}: Decoding process shows how different paths yield outputs of varying quality.
        Our key insight is that the lower $\entropyPath$ indicates path yeilding better output, providing a potent internal signal for generation quality.
    }
    \label{fig:main}
\end{figure}

\section{Preliminaries}
\label{sec:preliminaries}

Let $\mathcal{V}$ be a vocabulary of size $K$, and consider a sequence of $L$ tokens $\mathbf{x}_0 = (x_0^1, \dots, x_0^L)$.
For any position $\ell \in \{1, \dots, L\}$, the token $x_0^\ell$ is represented by a one-hot vector $\mathbf{x}_0^\ell \in \{0, 1\}^K$, whose component corresponding to the index of $x_0^\ell$ is $1$.

\subsection{Forward Process and Training Objective} 
The training of MDMs relies on a continuous-time forward corruption process.
For any time $t \in [0, 1]$, a latent variable $\mathbf{z}_t$ is generated from $\mathbf{x}_0$.
This is achieved probabilistically: each token $x_0^\ell$ is independently replaced by a special \texttt{[MASK]} token with probability $1 - \alpha_t$, and remains unchanged with probability $\alpha_t$. There $\alpha_t \in [0,1]$ is a strictly decreasing noise schedule with $\alpha_0 \approx 1$ and $\alpha_1 \approx 0$~\citep{sahoo2024simple}.
Let $\mathcal{M}_t$ be the set of indices that are masked at time $t$.

An MDM with denoising network $p_{\mtheta}$, parameterized by $\mtheta$, is trained to reverse this process.
Given the corrupted variable $\mathbf{z}_t$ and the time $t$, the MDM predicts the probability distribution $\hat{\mathbf{p}}_0$ for each token in the original sequence as an estimate of $\mathbf{x}_0$:
\begin{equation}
    \hat{\mathbf{p}}_t^\ell = p_{\mtheta}(X_t^\ell | \mathbf{z}_t, t) \in \Delta^{K}, \quad \forall \ell \in \{1, \dots, L\} \,,
\end{equation}
where $X_t^\ell$ is the random variable for the token at position $\ell$ and time $t$,  $\Delta^{K}$ is the $K$-simplex.

Parameters $\mtheta$ are optimized by minimizing the Negative Evidence Lower Bound (NELBO)\citep{lou2023discrete, shi2024simplified, sahoo2024simple}.
This objective simplifies to an integral of the weighted negative log-likelihood over the masked tokens:
\begin{equation}
    \mathcal{L}(\mtheta) = \EEb{ \mathbf{x}_0 \sim q(\mathbf{x}_0) }{ \int_0^1 w(t) \mathbb{E}_{\mathbf{z}_t | \mathbf{x}_0} \Big[ \sum_{\ell \in \mathcal{M}_t} -\log p_{\mtheta}(x_0^\ell | \mathbf{z}_t, t) \Big] dt }   ,
\end{equation}
where $q(\mathbf{x}_0)$ is the posterior data distribution, $1-\alpha_t$ is the unmasking probability at time $t$ and $w(t) = \abs{ \frac{d\alpha_t}{dt} } \frac{1}{1 - \alpha_t}$ is the weighting function.

\subsection{Reverse Process for Generation}
The inference process generates a sequence $\mathbf{x}_0$ by iteratively denoising from a fully masked sequence. The procedure starts with $\mathbf{z}_1$, where all tokens are \texttt{[MASK]}. It then proceeds backward in time along a discrete schedule $1 = t_N > t_{N-1} > \dots > t_0 = 0$.

At each step $i$ (from $N$ down to 1), the MDM $p_{\mtheta}$ takes the current state $\mathbf{z}_{t_i}$ and time $t_i$ as input to predict the original token probability distribution $\hat{\mathbf{p}}_0^\ell$ for each position $\ell$.
Then, the next state $\mathbf{z}_{t_{i-1}}$ is sampled from the reverse transition kernel $p(\mathbf{z}_{t_{i-1}} | \mathbf{z}_{t_i})$.
This distribution allows a masked token to either be unmasked to a content token (sampled according to $\hat{\mathbf{p}}_0^\ell$) or to remain masked, governed by the noise levels $\alpha_{t_i}$ and $\alpha_{t_{i-1}}$.
This iterative refinement continues until $t_0=0$, yielding the final generated sequence.

\section{Modeling Path Uncertainty with Denoising Entropy}

\subsection{Path Uncertainty}
\label{sec:path_uncertainty}
A decoding path $\tau = (\mathbf{z}_{t_N}, \dots, \mathbf{z}_{t_0})$ is a sequence of latent states traversed during the generative process, guided by a decoding strategy $\pi$.
The strategy $\pi$ determines the transition from $\mathbf{z}_{t_i}$ to $\mathbf{z}_{t_{i-1}}$ at each step, based on the model transition kernel $p(\cdot|\cdot)$.

We define \textbf{Path Uncertainty} as the model cumulative predictive uncertainty along a \textit{single} decoding path $\tau$, which reflects how \textit{confident} the model is throughout the entire generation of a sequence.

\subsection{Denoising Entropy}
\label{sec:denoising_entropy}
To quantify path uncertainty, we introduce \textbf{Denoising Entropy}. 

We first define \textit{Oracle State Uncertainty} as the output uncertainty of the joint prediction distribution at any given state.

\begin{definition}[Oracle State Uncertainty]
\label{def:oracle_state_uncertainty}
Let $\mathbf{X}_{\mathcal{M}_t} = \{X_0^\ell\}_{\ell \in \mathcal{M}_t}$ be the random vector of the true tokens at all \texttt{[MASK]} positions. Oracle State Uncertainty is the Shannon entropy of the model's joint predictive distribution over this vector, conditioned on the latent state $\mathbf{z}_t$:
\begin{equation}
    H_{\text{oracle}}(\mathbf{z}_t) \triangleq H( \mathbf{X}_{\mathcal{M}_t} | \mathbf{z}_t ).
\end{equation}
\end{definition}

Then, the \textit{State Entropy} is defined to measure the average instantaneous prediction uncertainty.

\begin{definition}[State Entropy]
    \label{def:entropy_state}
    For a given latent sequence $\mathbf{z}_t$ at time $t \in [0, 1]$, \textit{State Entropy} ($\entropyState$) is defined as the average Shannon entropy of the model predictive distributions over the set of masked positions $\mathcal{M}_t$:
    \begin{equation}
        \entropyState(\mathbf{z}_t) \triangleq \frac{1}{|\mathcal{M}_t|} \sum_{\ell \in \mathcal{M}_t} H\left( p_{\mtheta}(X_0^\ell | \mathbf{z}_t, t) \right) \,,
    \end{equation}
    where $H(\cdot)$ denotes the Shannon entropy and the definition is valid for $|\mathcal{M}_t| > 0$.
\end{definition}

The integral of \textit{State Entropy} across the entire path then yields \textit{Path Entropy}, which quantifies the cumulative uncertainty of the generative process.

\begin{definition}[Path Entropy]
    \textit{Path Entropy} ($\entropyPath$) provides measure for the total uncertainty of a denoising process with a decoding path $\tau$.
    It is the integral of the $\entropyState$ over the generation time.
    In a discrete-time setting with $N$ steps, it can be approximated by averaging the $\entropyState$ across all steps:
    \begin{equation}
        \entropyPath(\tau) \triangleq \int_0^1 \entropyState(\mathbf{z}_t) dt \approx \frac{1}{N} \sum_{i=1}^{N} \entropyState(\mathbf{z}_{t_i}) \,.
    \end{equation}
\end{definition}

\subsection{Entropy-Guided Decoding}
\label{sec:guided_decoding}

\begin{figure}[h]
    \centering
    \includegraphics[width=1.0\textwidth]{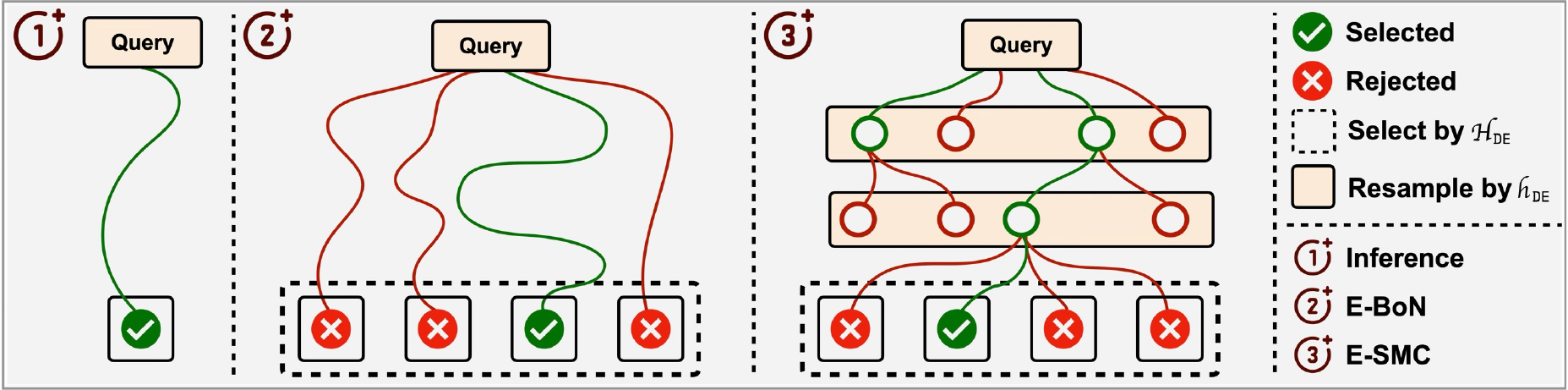}
    \caption{
        \textbf{An overview of our entropy-guided decoding algorithms.}
        While \emph{\textbf{standard inference}} in subfigure (1) generates a single decoding path, our methods explore multiple candidate paths guided by Denoising Entropy.
        \emph{\textbf{\EBoN}} in subfigure (2) performs \textbf{post-hoc selection}, choosing the best path from multiple independent candidates based on path entropy $\entropyPath$.
        \emph{\textbf{\ESMC}} in subfigure (3) provides \textbf{real-time guidance}, iteratively pruning high-entropy paths and replicating low-entropy ones based on state entropy $\entropyState$ of partial solutions.
    }
    \label{fig:method}
\end{figure}

A decoding strategy $\pi$ induces a distribution $P(\tau|\pi)$ over paths and their corresponding path entropies $\entropyPath(\tau)$.
This implies two properties: (i) a single stochastic strategy yields a variance in path uncertainty, $\mathbb{V}_{\tau \sim P(\cdot|\pi)}[\entropyPath(\tau)] > 0$; and (ii) distinct strategies, $\pi_a$ and $\pi_b$, produce different expected uncertainties, $\mathbb{E}_{\pi_a}[\entropyPath(\tau)] \neq \mathbb{E}_{\pi_b}[\entropyPath(\tau)]$.

Hypothesizing that low path uncertainty lead to higher-quality outputs~\citep{xu2020understanding}, generative decoding can be framed as an optimization problem.
For a generative task, the objective is to find an optimal strategy $\pi^\star$ that minimizes the expected path entropy:
\begin{equation}
    \pi^\star = \arg\min_{\pi} \mathbb{E}_{\tau \sim P(\cdot|\pi)} [\entropyPath(\tau)] \,.
\end{equation}
While finding the globally optimal strategy $\pi^\star$ is intractable due to the vast search space, we introduce two algorithms that leverage denoising entropy as an active guidance signal: (i) \emph{a post-hoc selection method, \textbf{Entropy-based Best-of-N (\EBoN)}}, and (ii) \emph{a real-time path optimization strategy, \textbf{Entropy-guided Sequential Monte Carlo (\ESMC)}}, both designed to effectively approximate this objective. 
Figure~\ref{fig:method} provides a conceptual overview \EBoN, \ESMC, and standard single-path inference.

\subsubsection{Entropy-based Best-of-N}
$\entropyPath$ can be used for post-hoc selection from a set of candidates.
\EBoN formalizes this principle.
Given a population of $M$ candidate decoding paths, $\{\tau^{(1)}, \dots, \tau^{(M)}\}$,
\EBoN identifies the single best path with minimum $\entropyPath$:
\begin{equation}
    \tau^\star = \underset{\tau^{(m)} \in \{\tau^{(1)}, \dots, \tau^{(M)}\}}{\arg\min} \entropyPath(\tau^{(m)}) \,.
\end{equation}

\EBoN is straightforward, requiring the full generation of $M$ samples before selection.
However, it expends the full computational budget uniformly across all paths, without any mechanism to redirect resources during generation~\citep{chatterjee2018sample}.

\subsubsection{Entropy-guided Sequential Monte Carlo}
To actively optimize the decoding path in real-time, we introduce \textbf{\ESMC}, a variant of Sequential Monte Carlo~\citep{doucet2001introduction} adapted for MDM reverse process~\citep{singhal2025general}.
\ESMC employs a guidance mechanism that operates on $M$ parallel particles.
By evaluating each partial path via State Entropy $\entropyState$, \ESMC can dynamically reallocate computational budget and prune high-entropy paths while replicating low-entropy ones.
This iterative search process is structured around three steps: \emph{Propagation, Evaluation, and Resampling:}
\begin{itemize}[leftmargin=*, itemsep=0pt, topsep=0pt]
    \item \textbf{Propagation.}
          At each reverse step $i \in \{N, \dots, 1\}$, we advance each particle $\mathbf{z}_{t_i}^{(m)}$ in the set $\mathcal{Z}_{t_i}$ to a new state $\mathbf{z}_{t_{i-1}}^{(m)}$ by sampling from the proposal distribution given by MDM reverse kernel, $\mathbf{z}_{t_{i-1}}^{(m)} \sim p(\cdot | \mathbf{z}_{t_i}^{(m)})$.
          This process generates the subsequent particle population $\mathcal{Z}_{t_{i-1}}$.

    \item \textbf{Evaluation.}
          To actively guide the search, evaluation are performed periodically at fixed intervals of $\Delta i_r$ steps.
          During evaluation, each particle $\mathbf{z}_{t_{i-1}}^{(m)}$ in current set $\mathcal{Z}_{t_{i-1}}$ is assessed using a potential function $\Phi(\mathbf{z}; \lambda)$.
          $\Phi$ is designed to assign a higher score to particles with lower $\entropyState(\mathbf{z})$.
          Temperature $\lambda$ modulates the sharpness of the resulting score distribution.
          $\Phi$ is detailed in Appendix~\ref{app:potential}.

    \item \textbf{Resampling.}
          Potential scores computed during evaluation are normalized to form a categorical distribution over the current particle set $\mathcal{Z}_{t_{i-1}}$.
          This distribution assigns a selection probability $w^{(m)}$ to each particle.
          A new population is formed by drawing $M$ particles with replacement from $\mathcal{Z}_{t_{i-1}}$ according to $w^{(m)}$.
          Resampling mechanism prunes high-entropy paths while replicating low-entropy ones, steering search towards more promising path space~\citep{moral2004feynman}.
          Full process is detailed in Algorithm~\ref{alg:esmc}.
\end{itemize}

\begin{algorithm}[!t]
    \caption{Entropy-guided Sequential Monte Carlo}
    \label{alg:esmc}
    \begin{algorithmic}[1]
        \STATE \textbf{Input:} MDM $\theta$, number of particles $M$, temperature $\lambda$, resampling interval $\Delta i_r$.
        \STATE \textbf{Initialize:} Sample initial particle set $\mathcal{Z}_{t_N} = \{\mathbf{z}_{t_N}^{(m)}\}_{m=1}^M$ where $\mathbf{z}_{t_N}^{(m)} \sim p(\mathbf{z}_{t_N})$.
        \FOR{$i = N, N-1, \ldots, 1$}
        \STATE \textbf{Propagate:} For each particle $m \in \{1, \dots, M\}$, sample $\mathbf{z}_{t_{i-1}}^{(m)} \sim p_\theta(\cdot | \mathbf{z}_{t_i}^{(m)})$.
        \IF{$(N-i+1) \pmod{\Delta i_r} = 0$ \textbf{and} $i > 1$}
        \STATE \textbf{Evaluate:} For each particle $m$, compute potential $G^{(m)} \leftarrow \Phi(\mathbf{z}_{t_{i-1}}^{(m)}; \lambda)$.
        \STATE Normalize to get probabilities: $w^{(m)} \leftarrow G^{(m)} \big/ \sum_{j=1}^M G^{(j)}$ for all $m$.
        \STATE \textbf{Resample:} Draw ancestor indices $\{a^{(m)}\}_{m=1}^M$ where $a^{(m)} \sim \text{Categorical}(\{w^{(j)}\}_{j=1}^M)$.
        \STATE Update particle set: $\mathcal{Z}_{t_{i-1}} \leftarrow \{\mathbf{z}_{t_{i-1}}^{(a^{(m)})}\}_{m=1}^M$.
        \ENDIF
        \ENDFOR
        \STATE \textbf{Return:} Final population $\mathcal{Z}_{t_0}$.
    \end{algorithmic}
\end{algorithm}

\subsection{Theoretical Justification for Denoising Entropy}
\label{sec:theoretical_grounding}

The effectiveness of \EBoN and \ESMC, is predicated on Denoising Entropy being a indicator for generation quality. This section validates the use of this metric as a decoding objective by establishing its theoretical basis through two key properties. First, we prove that State Entropy $\entropyState$ is a computable upper bound on the ideal, yet intractable, joint entropy of the masked tokens (Proposition~\ref{prop:proposition_1}). Second, we relate $\entropyState$ directly to the model's training objective, showing it serves as a close proxy for the instantaneous loss (Proposition~\ref{prop:proposition_2}). Together, these results justify minimizing Denoising Entropy as a principled strategy for guiding the generative process toward more consistent and higher-quality outputs. We also provide a further analysis of Denoising Entropy properties in Appendix~\ref{app:other_analysis}.

\paragraph{$\entropyState$ as a bound on ideal uncertainty.}
$\entropyState$ is a well-posed measure of uncertainty by relating it to an ideal but intractable metric, \textit{Oracle State Uncertainty} $H_{\text{oracle}}$, defined as the joint entropy over all \texttt{[MASK]} tokens.
$\entropyState$ serves as a computable upper bound to this oracle.

\begin{proposition}[$\entropyState$ as an upper bound]
    \label{prop:proposition_1}
    Oracle State Uncertainty $H_{\text{oracle}}(\mathbf{z}_t)$ is upper-bounded by the sum of marginal entropies, which is directly proportional to State Entropy $\entropyState(\mathbf{z}_t)$:
    \begin{equation}
        H_{\text{oracle}}(\mathbf{z}_t) \le \sum_{\ell \in \mathcal{M}_t} H\left( p_{\mtheta}(X_0^\ell | \mathbf{z}_t, t) \right) = |\mathcal{M}_t| \cdot \entropyState(\mathbf{z}_t) \,.
    \end{equation}
\end{proposition}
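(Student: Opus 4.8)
The plan is to recognize the claimed inequality as the classical subadditivity of Shannon entropy applied to the random vector $\mathbf{X}_{\mathcal{M}_t} = \{X_0^\ell\}_{\ell \in \mathcal{M}_t}$ under the model's joint predictive law conditioned on $\mathbf{z}_t$, and then to identify the resulting sum of marginal entropies with $|\mathcal{M}_t| \cdot \entropyState(\mathbf{z}_t)$ via Definition~\ref{def:entropy_state}. The right-hand equality is immediate: by definition $\entropyState(\mathbf{z}_t) = \frac{1}{|\mathcal{M}_t|}\sum_{\ell \in \mathcal{M}_t} H(p_{\mtheta}(X_0^\ell|\mathbf{z}_t,t))$, so multiplying through by $|\mathcal{M}_t|$ recovers the middle expression. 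All the content is therefore carried by the left inequality.

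First I would make explicit that $p_{\mtheta}(X_0^\ell \mid \mathbf{z}_t, t)$ is precisely the position-$\ell$ marginal of the joint predictive distribution $p_{\mtheta}(\mathbf{X}_{\mathcal{M}_t}\mid \mathbf{z}_t)$ whose entropy defines $H_{\text{oracle}}$; this links the per-position softmax outputs of the denoising network to the joint object. Then I would invoke the chain rule for entropy, ordering the positions in $\mathcal{M}_t$ arbitrarily as $\ell_1,\dots,\ell_{|\mathcal{M}_t|}$, to write the joint entropy as a sum of conditional entropies, and bound each term $H(X_0^{\ell_k}\mid X_0^{\ell_1},\dots,X_0^{\ell_{k-1}},\mathbf{z}_t)$ above by the unconditional marginal $H(X_0^{\ell_k}\mid \mathbf{z}_t)$ using the fact that conditioning never increases entropy. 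Summing yields $H_{\text{oracle}}(\mathbf{z}_t) \le \sum_{\ell \in \mathcal{M}_t} H(p_{\mtheta}(X_0^\ell\mid\mathbf{z}_t,t))$, exactly the claim. Equivalently, the whole step can be phrased as non-negativity of the total correlation of the masked coordinates, $\sum_\ell H(X_0^\ell) - H(\mathbf{X}_{\mathcal{M}_t}) = D_{\mathrm{KL}}(p_{\mtheta}(\mathbf{X}_{\mathcal{M}_t}) \,\|\, \prod_\ell p_{\mtheta}(X_0^\ell)) \ge 0$.

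The only genuine subtlety, and the step I would treat most carefully, is the marginal identification above: asserting that the network's per-coordinate outputs coincide with the marginals of one coherent joint distribution, so that the two sides of the inequality are commensurate quantities rather than unrelated objects. This identification is also the source of the gap, since equality holds exactly when the masked tokens are conditionally independent given $\mathbf{z}_t$; the slack $|\mathcal{M}_t|\,\entropyState(\mathbf{z}_t) - H_{\text{oracle}}(\mathbf{z}_t)$ is precisely the inter-token total correlation that the factorized State Entropy discards. I would note this interpretation briefly, as it explains why $\entropyState$ is a conservative yet computable surrogate for the intractable oracle. Beyond this, no nontrivial estimates are needed: once the marginal identification is in place, the result is a direct application of standard information-theoretic inequalities.
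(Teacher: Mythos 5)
Your proof is correct and follows essentially the same route as the paper's: the paper invokes subadditivity of conditional entropy as a lemma and identifies the resulting sum of marginals with $|\mathcal{M}_t|\cdot\entropyState(\mathbf{z}_t)$ by Definition~\ref{def:entropy_state}, exactly as you do (your chain-rule-plus-conditioning argument is just the standard proof of that lemma). Your explicit remark that the network's per-position outputs must be the marginals of the joint law defining $H_{\text{oracle}}$ is a point the paper passes over with ``by definition of MDM output,'' but it does not constitute a different approach.
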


$\entropyState$ sums the uncertainty of each masked token independently, thereby ignoring contextual constraints that tokens impose on each other and forming a valid upper bound.
Proof relying on the subadditivity of Shannon entropy~\citep{shannon1948mathematical} is provided in Appendix~\ref{app:proof_proposition_1}. 

\paragraph{$\entropyState$ as a proxy for MDM loss.}
We define $\epsilon$-accurate condition with the step-wise error.
\begin{definition}[$\epsilon$-Accurate Denoising Model]
We say an MDM is $\epsilon$-accurate if, for any time step $t_i$, the Kullback-Leibler(KL) divergence between the true posterior transition kernel $q$ (conditioned on ground truth) and the model's reverse kernel $p_{\theta}$ is bounded:
\begin{equation}
    \mathbb{E}_{\mathbf{z}_{t_i} \sim q} \left[ D_{\mathrm{KL}}\Big( q(\mathbf{z}_{t_{i-1}} | \mathbf{z}_{t_i}) \parallel p_\theta(\mathbf{z}_{t_{i-1}} | \mathbf{z}_{t_i}) \Big) \right] \leq \epsilon \,,
\end{equation}
where $\epsilon$ is a small constant reflecting the model's training loss.
\end{definition}

Under the assumption of $\epsilon$-accuracy, $\entropyState$ serves as a direct proxy to  approximate the instantaneous per-token training loss with error of $\mathcal{O}(\epsilon)$. We state the approximation using state entropy as follows:

\begin{proposition}[Approximation of Normalized Loss by $\entropyState$]
    \label{prop:proposition_2}
    Assume that MDM is $\epsilon$-accurate. 
    $\entropyState(\mathbf{z}_t)$ provides an approximation to the instantaneous, per-token normalized training loss:
    \begin{equation}
        \frac{1}{|\mathcal{M}_t|} \sum_{\ell \in \mathcal{M}_t} \left( -\log p_{\mtheta}(x_0^\ell | \mathbf{z}_t, t) \right)-\entropyState(\mathbf{z}_t) \leq \epsilon+\sqrt{\frac{\epsilon}{2}}logK \,,
    \end{equation}
    where $K$ is the vocabulary set size.
\end{proposition}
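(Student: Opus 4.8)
The plan is to work position-by-position and to read the per-token loss in expectation over the true token, consistent with the $\epsilon$-accuracy condition being stated in expectation. Fix a masked position $\ell \in \mathcal{M}_t$, write $p_\ell \triangleq p_{\mtheta}(X_0^\ell \mid \mathbf{z}_t, t)$ for the model's marginal predictive distribution and $q_\ell$ for the true marginal posterior $q(X_0^\ell \mid \mathbf{z}_t)$. Taking the expectation of the log-likelihood term over $x_0^\ell \sim q_\ell$ turns it into the cross-entropy $\mathbb{E}_{q_\ell}[-\log p_\ell(x_0^\ell)] = \mathrm{CE}(q_\ell, p_\ell)$, and the standard identity $\mathrm{CE}(q_\ell, p_\ell) = H(q_\ell) + D_{\mathrm{KL}}(q_\ell \parallel p_\ell)$ lets me rewrite the per-token gap as
\begin{equation}
  \mathrm{CE}(q_\ell, p_\ell) - H(p_\ell) = D_{\mathrm{KL}}(q_\ell \parallel p_\ell) + \big( H(q_\ell) - H(p_\ell) \big).
\end{equation}
The first summand is a divergence I will control with the $\epsilon$-accuracy assumption; the second is an entropy difference I will control by how close $q_\ell$ and $p_\ell$ are in total variation. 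Averaging this identity over $\ell \in \mathcal{M}_t$ reproduces exactly the left-hand side of the proposition, so it suffices to bound the average of the two summands.

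For the divergence term, the idea is to pass from the reverse-kernel KL (which $\epsilon$-accuracy bounds) to the marginal predictive KL. Because the MDM reverse kernel factorizes across positions, and its content branch at each masked position is governed precisely by $p_\ell$, additivity of KL for product kernels lets me express $D_{\mathrm{KL}}\big(q(\mathbf{z}_{t_{i-1}} \mid \mathbf{z}_{t_i}) \parallel p_{\mtheta}(\mathbf{z}_{t_{i-1}} \mid \mathbf{z}_{t_i})\big)$ as a noise-schedule-weighted sum of the per-position divergences $D_{\mathrm{KL}}(q_\ell \parallel p_\ell)$. After normalizing by $|\mathcal{M}_t|$ and folding the unmasking-probability weight into the constant, the $\epsilon$-accuracy bound yields $\frac{1}{|\mathcal{M}_t|} \sum_{\ell \in \mathcal{M}_t} D_{\mathrm{KL}}(q_\ell \parallel p_\ell) \le \epsilon$, which is the source of the $\epsilon$ term in the final bound.

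For the entropy-difference term, I will apply Pinsker's inequality to convert the divergence into a total-variation bound, $\mathrm{TV}(q_\ell, p_\ell) \le \sqrt{\tfrac{1}{2} D_{\mathrm{KL}}(q_\ell \parallel p_\ell)} \le \sqrt{\epsilon_\ell/2}$, and then invoke a continuity-of-entropy (Fannes-type) estimate of the form $|H(q_\ell) - H(p_\ell)| \le \mathrm{TV}(q_\ell, p_\ell)\,\log K$ on the $K$-letter alphabet. This gives a per-position bound $\sqrt{\epsilon_\ell/2}\,\log K$; averaging over $\ell$ and using concavity of the square root (Jensen) together with $\frac{1}{|\mathcal{M}_t|}\sum_\ell \epsilon_\ell \le \epsilon$ pulls this up to $\sqrt{\epsilon/2}\,\log K$. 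Adding the two contributions yields the claimed $\epsilon + \sqrt{\epsilon/2}\,\log K$.

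The main obstacle is the divergence step: rigorously relating the reverse-transition-kernel KL that defines $\epsilon$-accuracy to the marginal predictive divergences $D_{\mathrm{KL}}(q_\ell \parallel p_\ell)$. This requires making precise what the ``true posterior kernel conditioned on ground truth'' is (a point mass at $x_0^\ell$ versus the true marginal posterior $q_\ell$) and correctly accounting for the unmasking-probability weight $\beta_i = (\alpha_{t_{i-1}} - \alpha_{t_i})/(1-\alpha_{t_i})$ multiplying each per-position term, so that it is absorbed into the constant $\epsilon$ rather than degrading the bound. A secondary subtlety is that the clean $\log K$ factor in the entropy-difference step uses the approximate form of Fannes' inequality; the exact inequality carries an additive binary-entropy correction $h(\mathrm{TV})$ that must be argued to be lower-order (or absorbed) to land exactly on $\sqrt{\epsilon/2}\,\log K$.
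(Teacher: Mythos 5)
Your proposal follows essentially the same route as the paper's own proof: the cross-entropy decomposition $\mathrm{CE}(q_\ell,p_\ell)=H(q_\ell)+D_{\mathrm{KL}}(q_\ell\|p_\ell)$, a bound of $\epsilon$ on the marginal predictive KL inherited from the $\epsilon$-accuracy assumption, and Pinsker plus a Fannes-type continuity estimate giving $|H(q_\ell)-H(p_\ell)|\le\sqrt{\epsilon/2}\,\log K$. The two obstacles you flag --- rigorously passing from the reverse-kernel KL to the per-position marginal KL, and the dropped binary-entropy correction $h(\delta)$ in Fannes' inequality --- are precisely the steps the paper asserts without detailed justification, so your plan is, if anything, slightly more careful there (e.g., your use of Jensen to average $\sqrt{\epsilon_\ell/2}$ over positions, where the paper instead appeals informally to the law of large numbers).
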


Proof in Appendix~\ref{app:proof_proposition_2}. While $\entropyState$ measures instantaneous difficulty, $\entropyPath$ extends by integrating it throughout the path, 
measuring total generation challenge and reflecting distribution uncertainty.

The full objective $\mathcal{L}$ over masked tokens can be approximated by $\entropyState$:
\begin{equation}
    \mathcal{L}(\mathbf{x}_0) 
    = \int_0^1 w(t) \left[ \sum_{\ell \in \mathcal{M}_t} -\log p_\theta(x_0^\ell | \mathbf{z}_t, t) \right] dt
    \approx \int_0^1 w(t) \cdot |\mathcal{M}_t| \cdot \entropyState(\mathbf{z}_t) dt,
\end{equation}
where the weighting function is $w(t) = |\frac{d\alpha_t}{dt}|\frac{1}{1 - \alpha_t}$. While Path Entropy is the unweighted integral $\entropyPath(\tau) = \int_0^1 \entropyState(\mathbf{z}_t) dt$, the objective $\mathcal{L}$ provides rigorous justification for Path Entropy.

\paragraph{Entropy gap as output quality bound.}
We denote the true and model's path distribution as $\Pr$ and $\widehat{\Pr}$ respectively for decoding path $\tau$. 
Let $\mu_{\Pr} = \mathbb{E}_{\tau \sim \Pr}[\entropyPath(\tau)]$, $\mu_{\widehat{\Pr}} = \mathbb{E}_{\tau \sim \widehat{\Pr}}[\entropyPath(\tau)]$ be the expected Path Entropy evaluated on reference paths and model's generations, respectively. 

\begin{proposition}[Path Entropy Gap as a Quality Bound]
\label{thm:quality_bound}
Assume that $|\entropyPath(\tau)|\leq B$ where $B$ is a positive constant.
The KL divergence between the reference path distribution $\Pr$ and the generated one $\widehat{\Pr}$ is lower-bounded by the squared error in their expected Path Entropies:
\begin{equation}
\label{eq:main_bound}
    D_{\mathrm{KL}}(\Pr \| \widehat{\Pr}) \geq \frac{1}{2B^2} \left( \mu_{\widehat{\Pr}} - \mu_{\Pr} \right)^2 \,.
\end{equation}
\end{proposition}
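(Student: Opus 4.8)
The plan is to prove the bound by chaining two classical inequalities: a bounded-difference (coupling) estimate that controls the gap between expectations of a bounded observable by the total variation distance, followed by Pinsker's inequality relating total variation to the KL divergence. Throughout, $\entropyPath$ plays the role of a bounded test function with $|\entropyPath(\tau)| \le B$, and $\Pr, \widehat{\Pr}$ are the two path distributions. I would first dispose of the degenerate case: if $\Pr$ is not absolutely continuous with respect to $\widehat{\Pr}$ then $D_{\mathrm{KL}}(\Pr\|\widehat{\Pr}) = +\infty$ and the inequality holds trivially, so I may assume the divergence is finite.

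First I would write the entropy gap as an integral against the signed measure $\Pr - \widehat{\Pr}$, namely $\mu_{\Pr} - \mu_{\widehat{\Pr}} = \int \entropyPath(\tau)\,(d\Pr - d\widehat{\Pr})(\tau)$, using $\mu_{\Pr} = \mathbb{E}_{\tau\sim\Pr}[\entropyPath(\tau)]$ and likewise for $\widehat{\Pr}$. Bounding the integrand by its sup norm then gives $|\mu_{\widehat{\Pr}} - \mu_{\Pr}| \le \int |\entropyPath|\,|d\Pr - d\widehat{\Pr}| \le B \int |d\Pr - d\widehat{\Pr}| = 2B\,\mathrm{TV}(\Pr,\widehat{\Pr})$, where I use $|\entropyPath| \le B$ and the identity $\int |d\Pr - d\widehat{\Pr}| = 2\,\mathrm{TV}(\Pr,\widehat{\Pr})$. (A centering trick, subtracting the midpoint of the range of $\entropyPath$, would sharpen the prefactor from $2B$ to the oscillation of $\entropyPath$, but the stated bound only needs $|\entropyPath| \le B$, so I would skip it.)

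Next I would invoke Pinsker's inequality in the form $\mathrm{TV}(\Pr,\widehat{\Pr}) \le \sqrt{\tfrac12 D_{\mathrm{KL}}(\Pr\|\widehat{\Pr})}$. Substituting into the previous estimate yields $|\mu_{\widehat{\Pr}} - \mu_{\Pr}| \le 2B\sqrt{\tfrac12 D_{\mathrm{KL}}(\Pr\|\widehat{\Pr})}$. Squaring both sides gives $(\mu_{\widehat{\Pr}} - \mu_{\Pr})^2 \le 4B^2 \cdot \tfrac12 D_{\mathrm{KL}}(\Pr\|\widehat{\Pr}) = 2B^2 D_{\mathrm{KL}}(\Pr\|\widehat{\Pr})$, and rearranging produces exactly $D_{\mathrm{KL}}(\Pr\|\widehat{\Pr}) \ge \frac{1}{2B^2}(\mu_{\widehat{\Pr}} - \mu_{\Pr})^2$, as claimed.

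There is no deep obstacle here; once the two tools are identified the argument is essentially bookkeeping. The points that require care are (i) getting the constant right, specifically tracking the factor of $2$ between $\int|d\Pr - d\widehat{\Pr}|$ and $\mathrm{TV}(\Pr,\widehat{\Pr})$, which together with the $\tfrac12$ inside Pinsker determines the exact $1/(2B^2)$ prefactor; and (ii) the direction of the KL divergence, since total variation is symmetric, Pinsker applies with either argument order, so the asymmetric $D_{\mathrm{KL}}(\Pr\|\widehat{\Pr})$ appearing in the statement is consistent. I would also emphasize that the bound is one-directional: it does not say a small entropy gap forces a small divergence, only that a large entropy gap witnesses a large divergence, which is the intended reading that the reference distribution's lower expected path entropy certifies a genuine discrepancy from the model's generative distribution.
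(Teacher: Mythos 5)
Your proof is correct and follows essentially the same route as the paper: the paper's argument is simply to cite a ``Pinsker's inequality for bounded functionals'' lemma, $\left|\mathbb{E}_{P}[f]-\mathbb{E}_{Q}[f]\right|\le B\sqrt{2D_{\mathrm{KL}}(P\|Q)}$, and apply it with $f=\entropyPath$, whereas you unpack that lemma into its two standard ingredients (the total-variation bound on expectation gaps of bounded functions, then classical Pinsker) with the constants tracked correctly. The only difference is that you actually prove the key lemma the paper states without proof, and you handle the degenerate infinite-divergence case explicitly, which the paper omits.
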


Making $\mu_{\widehat{\Pr}}$ closer to $\mu_{\Pr}$ is necessary for lower KL divergence, which represents higher output quality. 
\EBoN selects the path with the lowest $\entropyPath$ from finite candidate set; and \ESMC penalizes transitions into high-entropy states by re-weighting particles after interval steps, which also retains paths with low $\mu_{\widehat{\Pr}}$. 
\EBoN and \ESMC both reduce the path entropy gap $|\mu_{\widehat{\Pr}} - \mu_{\Pr}|$, tightening the lower bound for the KL divergence.
Details in Appendix~\ref{subsec:entropy_bound} \&~\ref{subsec:alg_justification}.

For $\epsilon$-accurate MDM with reverse kernel $p_{\theta}$ and true transition kernel $q$, the KL divergence of distributions $\Pr$ and $\widehat{\Pr}$ for decoding path $\tau$ is upper-bounded:
\begin{equation}
     D_{\mathrm{KL}}(\Pr(\tau) \| \widehat{\Pr}(\tau)) = \sum_{i=N}^1 \mathbb{E}_{\mathbf{z}_{t_i} \sim q} \left[ D_{\mathrm{KL}}(q(\cdot|\mathbf{z}_{t_i}) \| p_\theta(\cdot|\mathbf{z}_{t_i})) \right] \leq N \cdot \epsilon \,,
\end{equation}
where the local approximation errors accumulate over denoising steps and constitute the upper bound for the KL divergence between the reference and generated path distribution.

\section{Experiments}
\label{sec:experiments}

This section empirically studies Path Uncertainty and Denoising Entropy in MDMs through three stages:
\textbf{(i)} validating Denoising Entropy as a quality-aligned internal metric on MDMs;
\textbf{(ii)} improving generation by using \EBoN and \ESMC to optimize decoding paths;
\textbf{(iii)} scaling our method to larger models and complex reasoning and planning tasks.

\subsection{Validating Denoising Entropy as a Quality Metric}
\label{subsec:validate_entropy}
We study whether the internal metric $\entropyPath$, computed online during generation as a proxy for path uncertainty, aligns with external text quality evaluations.

\paragraph{Experimental setting.}
We use \texttt{MDLM} trained on OpenWebText~\citep{Gokaslan2019OpenWeb} with approximately 130 million non-embedding parameters for evaluation~\citep{sahoo2024simple}.
The MDLM generates sequences of length $1{,}024$ under random-order unmasking.
We vary the number of denoising steps $S=2^i$ for $i\in[5,10]$ and use \texttt{GPT2-Large}~\citep{radford2019language}, a substantially larger model than the \texttt{MDLM} we used, as ARM evaluator to compute Perplexity (PPL).
Perplexity is a standard evaluation metric for generative uncertainty and text quality quantification.
For each generated sample, we record $\entropyPath$ and $\ln(\text{PPL})$.

\begin{figure}[H]
    \centering
    \begin{minipage}[c]{0.48\textwidth}
        \centering
        \includegraphics[height=4.5cm,keepaspectratio]{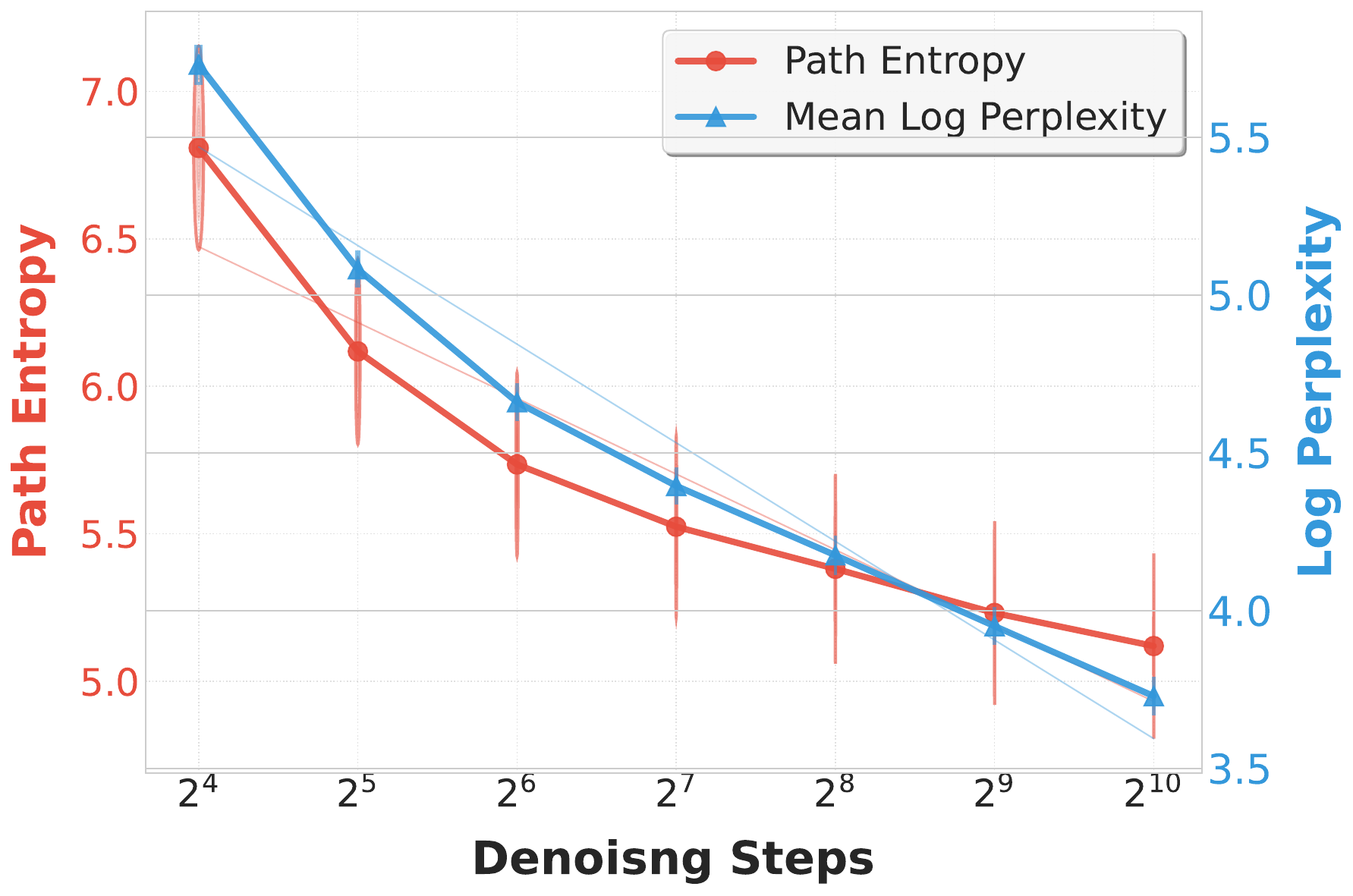}
        \subcaption{Trend of $\entropyPath$ vs. $\ln(\text{PPL})$ with Denoising Steps.}
        \label{fig:u_vs_ppl_trend}
    \end{minipage}
    \hfill
    \begin{minipage}[c]{0.48\textwidth}
        \centering
        \includegraphics[height=4.5cm,keepaspectratio]{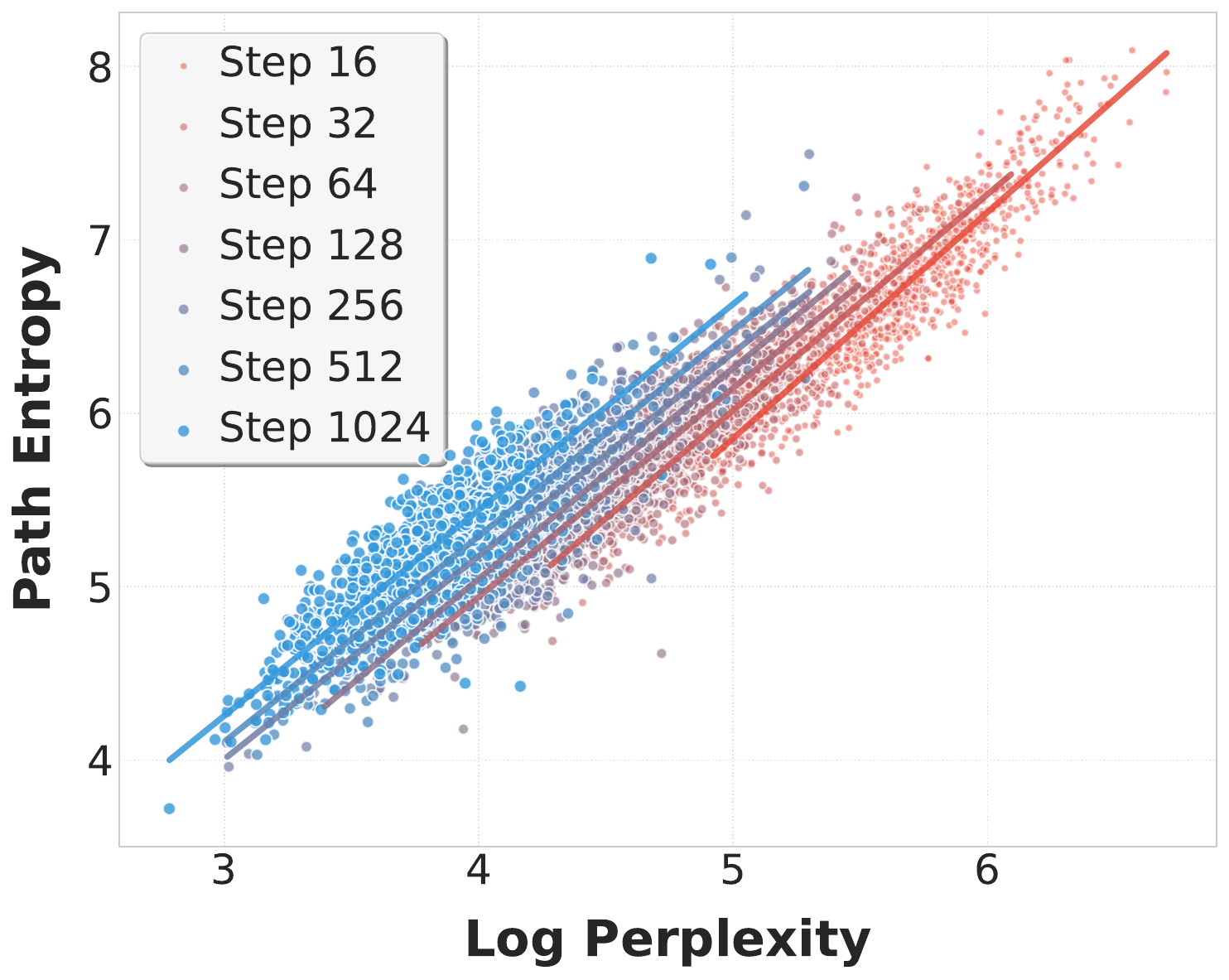}
        \subcaption{Per-sample correlation between $\entropyPath$ and $\ln(\text{PPL})$.}
        \label{fig:u_vs_ppl_scatter}
    \end{minipage}
    \caption{
        \textbf{Empirical validation of Path Entropy ($\entropyPath$) as an internal proxy for generation quality.}
        The aggregate trend in \textbf{(a)} shows that a more refined generation process reduces both the model's internal uncertainty ($\entropyPath$, \textcolor[RGB]{178,34,34}{red}) and the output perplexity ($\ln(\text{PPL})$, \textcolor[RGB]{70,130,180}{blue}).
        \textbf{(b)} provides a per-sample analysis, revealing a strong positive linear correlation between $\entropyPath$ and $\ln(\text{PPL})$. Details in Appendix~\ref{app:exp_res}.
    }
    \label{fig:u_vs_ppl_comparison}
\end{figure}

\paragraph{Results.}
\emph{Across thousands of unconditional generations, we observe a clear, near-linear relationship between $\entropyPath$ and $\ln(\text{PPL})$ as shown in Figure~\ref{fig:u_vs_ppl_comparison}.}
Lower $\entropyPath$ aligns with lower perplexity, indicating outputs with higher quality.
Both $\entropyPath$ and $\ln(\text{PPL})$ decrease as the number of denoising steps $S$ increases, reflecting a reduction in path uncertainty and a improvement in sample quality.
We use this case to show that \emph{$\entropyPath$ is a reliable internal proxy for MDM-generated text quality}: from its own denoising dynamics, MDM can anticipate whether it is on a high- or low-quality path.

\subsection{Improving Generation by Using Denoising Entropy to Guide Path Search}
\label{subsec:guidance_exp}

Building upon finding  that $\entropyPath$ serves as a internal proxy for text quality, we now investigate whether Denoising Entropy can be actively employed to mitigate Path Uncertainty and improve generation.

\paragraph{Experimental setting.}
We evaluate and compare three decoding types: vanilla single path uniform sampling, with \EBoN and \ESMC.
We utilize the same \texttt{MDLM} as in Section~\ref{subsec:validate_entropy}, while adopting \texttt{GPT2-Large} and \texttt{Llama-3-8B} as the ARM evaluators.
We analyze performance measured with perplexity and diversity~\citep{zheng2024masked} across different numbers of denoising steps ($S$), particles ($K$), and resampling intervals ($\Delta i_r$).
In detail:
\begin{itemize}[nosep, leftmargin=12pt]
    \item Diversity is used to assess whether repetitive degeneration occurs in generated content. For a sequence of length $L$ that contains $K$ distinct tokens, with each token $k$ occurring $L_k$ times. Diversity $D$ is computed as $D = -\sum_{k=1}^{K} \frac{L_k}{L} \log \frac{L_k}{L}$.
    \item For \EBoN, we generate $K$ independent particles and return the one with the minimum $\entropyPath$.
    \item For \ESMC, we steer $K$ particles using weighted resampling with $\entropyState$ at intervals of $\Delta i_r$ steps, and finally return the one with the minimum $\entropyPath$.
    \item We also implemented an entropy minimization method based on Greedy Search as a comparison to demonstrate the negative impact of over-optimization. Results are presented in Table~\ref{app:tb_greedy}.
\end{itemize}

\begin{table}[H]
    \small
    \centering
    \setlength{\belowcaptionskip}{-5pt}

    \definecolor{model1color}{RGB}{70,130,180}
    \definecolor{model2color}{RGB}{178,34,34}
    \definecolor{lightgray}{gray}{0.9}
    \newcommand{\dscore}[2]{\textcolor{model1color}{#1}\,/\,\textcolor{model2color}{#2}}
    \caption{
        \textbf{Comparison of decoding strategies via ablation studies on hyperparameters.}
        The table analyzes the impact of the number of denoising steps ($S$), number of particles ($K$), and resampling interval ($\Delta i_r$).
        Perplexity is evaluated on two models: \textcolor{model1color}{\textbf{\texttt{GPT2-Large}}} and \textcolor{model2color}{\textbf{\texttt{Llama-3-8B}}}.
        \ESMC consistently shows the best performance, which can be achieved without compromising diversity in certain configurations.
        }
    \vspace{-0.5em}
    \label{tab:guidance_results}
    
    \rowcolors{3}{}{lightgray}
    \begin{tabular}{ccc ccc ccc}
        \toprule
        \multicolumn{3}{c}{\textbf{Configuration}} & \multicolumn{3}{c}{\textbf{Perplexity $\downarrow$}} & \multicolumn{3}{c}{\textbf{Diversity $\uparrow$}} \\
        \cmidrule(r){1-3} \cmidrule(lr){4-6} \cmidrule(l){7-9}
        \textbf{$S$} & \textbf{$K$} & \textbf{$\Delta i_r$} & \textbf{Vanilla} & \textbf{\EBoN} & \textbf{\ESMC} & \textbf{Vanilla} & \textbf{\EBoN} & \textbf{\ESMC} \\
        \midrule
        128 & 4 & 32 & \dscore{85.3}{84.5} & \dscore{66.4}{66.1} & \textbf{\dscore{63.4}{61.9}} & 5.53 & 5.47 & 5.48 \\
        256 & 4 & 32 & \dscore{68.5}{66.9} & \dscore{52.3}{55.2} & \textbf{\dscore{47.7}{51.5}} & 5.45 & 5.39 & 5.40 \\
        \midrule
        128 & 4 & 32 & \dscore{85.3}{84.5} & \dscore{66.4}{66.1} & \textbf{\dscore{63.4}{61.9}} & 5.53 & 5.47 & 5.48 \\
        128 & 8 & 32 & \dscore{85.3}{84.5} & \dscore{60.3}{59.8} & \textbf{\dscore{56.0}{59.0}} & 5.53 & 5.42 & 5.43 \\
        \midrule
        256 & 2 & 32 & \dscore{68.5}{66.9} & \dscore{60.0}{59.1} & \textbf{\dscore{55.5}{57.0}} & 5.45 & 5.43 & 5.43 \\
        256 & 4 & 32 & \dscore{68.5}{66.9} & \dscore{52.3}{55.2} & \textbf{\dscore{47.7}{51.5}} & 5.45 & 5.39 & 5.40 \\
        256 & 8 & 32 & \dscore{68.5}{66.9} & \dscore{47.5}{51.1} & \textbf{\dscore{40.4}{45.1}} & 5.45 & 5.37 & 5.31 \\
        \midrule
        256 & 4 & 8 & \dscore{68.5}{66.9} & \dscore{52.3}{55.2} & \textbf{\dscore{43.9}{44.4}} & 5.45 & 5.39 & 5.31 \\
        256 & 4 & 16 & \dscore{68.5}{66.9} & \dscore{52.3}{55.2} & \textbf{\dscore{45.1}{48.1}} & 5.45 & 5.39 & 5.35 \\
        256 & 4 & 32 & \dscore{68.5}{66.9} & \dscore{52.3}{55.2} & \textbf{\dscore{47.7}{51.5}} & 5.45 & 5.39 & 5.40 \\
        256 & 4 & 64 & \dscore{68.5}{66.9} & \dscore{52.3}{55.2} & \textbf{\dscore{50.2}{51.8}} & 5.45 & 5.39 & 5.42 \\
        256 & 4 & 128 & \dscore{68.5}{66.9} & \dscore{52.3}{55.2} & \textbf{\dscore{53.4}{54.1}} & 5.45 & 5.39 & 5.46 \\
        \bottomrule
    \end{tabular}
\end{table}

\paragraph{Results.}
As shown in Table~\ref{tab:guidance_results}, both entropy-guided strategies substantially outperform the vanilla sampler, \emph{confirming the practical utility of Denoising Entropy as a control signal.}
\emph{\ESMC consistently achieves the best performance, underscoring the benefit of its active, online guidance mechanism.}

\emph{\EBoN and \ESMC reduce perplexity while maintaining diversity comparable to the vanilla sampler. Under specific configurations, \ESMC can improve generation quality without compromising diversity.}
Table~\ref{app:tb_greedy} in Appendix~\ref{app:exp_res} shows optimizing Denoising Entropy with greedy search, which achieve significantly lower perplexity by sacrificing diversity. This aligns with the analysis in Proposition~\ref{thm:quality_bound} that over-minimizing $\widehat{\Pr}$ reduces generation quality.

The results also affirm the scalability of our methods, as performance improves directly with the computational budget. Increasing the number of particles ($K$) significantly enhances outcomes, particularly for \ESMC, which better exploits the expanded search space (e.g., perplexity drops from 47.7 to 36.1 as $K$ increases from 4 to 12). Furthermore, for \ESMC, more frequent resampling (a smaller $\Delta i_r$) provides a complementary and computationally efficient mean for improvement by actively pruning high-uncertainty paths.

\subsection{Scaling Entropy Guidance to Reasoning and Planning Tasks}
\label{sec:large_scale_exp}

Experiments in Section~\ref{subsec:validate_entropy} \& \ref{subsec:guidance_exp} validated Denoising Entropy as a proxy for generation quality and demonstrated the efficacy of our proposed algorithms, \EBoN and \ESMC, on a small-scale MDM using perplexity as the evaluation metric.
To assess the generalizability and practical impact of our approach, we now scale to advanced MDMs and evaluate on a suite of challenging reasoning benchmarks.
This shift enables evaluation beyond abstract quality to task-specific accuracy, testing whether entropy guidance improves problem-solving in large-scale MDMs.

\paragraph{Models \& Datasets.}
We conduct experiments on three large MDMs: \texttt{LLaDA-Instruct-8B}
~\citep{nie2025large}, \texttt{LLaDA-1.5-8B}~\citep{zhu2025llada}, and \texttt{Open-dCoder-0.5B}~\citep{opendllm2025}.
\texttt{LLaDA} are evaluated across five diverse benchmarks spanning three reasoning domains: mathematical reasoning (GSM8K~\citep{cobbe2021training}, MATH500~\citep{lightman2023let}), scientific reasoning (GPQA~\citep{rein2024gpqa}), and planning (Sudoku~\citep{nolte2024transformers} and Countdown~\citep{ye2024beyond}).
\texttt{Open-dCoder-0.5B} is an MDM for code generation, we evaluate its performance on HumanEval/HumanEval+~\citep{chen2021evaluating} and MBPP/MBPP+~\citep{austin2021program}.

\paragraph{Decoding strategies.}
We evaluate our path search algorithms with a spectrum of established MDM decoding strategies.
These include the standard uniform sampler~\citep{austin2021structured}, uncertainty-based samplers (e.g., confidence~\citep{chang2022maskgit}, entropy~\citep{ben2025accelerated}, and margin~\citep{kim2025train}), efficient samplers (Semi-AR~\citep{nie2025large}, EB-Sampler~\citep{ben2025accelerated}, and Fast-dLLM~\citep{wu2025fast}), and two recently proposed advanced strategies: P2~\citep{peng2025path} and PC-Sampler~\citep{huang2025pc}.
For block-wise strategy including Semi-AR and Fast-dLLM, we set the number of decoding blocks to 8 for all datasets.
A detailed description of each baseline strategy is provided in Appendix~\ref{app:baseline_strategies}.

\begin{table}[H]
    \centering
    \small
    \caption{\small
        \textbf{Accuracy (\%) of our path search algorithms on \texttt{LLaDA} models across five reasoning and planning benchmarks.}
        When applied to the strong PC-Sampler baseline, both \EBoN and \ESMC consistently improve performance.
        Gains over the baseline are highlighted in \textcolor[RGB]{178,34,34}{red}.
    }
    \vspace{-0.5em}
    \label{tab:llada_results}
    \begin{tabular}{lcccccc}
        \toprule
        \textbf{Strategies} & \textbf{GSM8K} & \textbf{MATH500} & \textbf{GPQA} & \textbf{Countdown} & \textbf{Sudoku} & \textbf{Avg.↑} \\
        \midrule
        \rowcolor{gray!10}
        \multicolumn{7}{c}{\textbf{LLaDA-Instruct-8B}} \\
        \midrule
        Uniform & 48.8 & 15.0 & 29.0 & 14.4 & 2.2 & 21.9 \\
        Confidence & 6.8 & 3.4 & 27.9 & 34.0 & 23.8 & 19.2 \\
        Entropy & 2.2 & 3.8 & 28.4 & 33.8 & 1.6 & 14.0 \\
        Margin & 11.1 & 1.8 & 28.4 & 33.9 & 26.6 & 20.4 \\
        P2 & 11.6 & 3.4 & 28.2 & 34.2 & 24.0 & 20.3 \\
        EB-Sampler & 1.6 & 3.6 & \textbf{29.9} & 34.1 & 24.2 & 18.7 \\
        Semi-AR & 77.9 & 27.6 & 27.7 & 32.6 & 0.0 & 33.2 \\
        Fast-dLLM & 78.2 & 28.4 & 28.6 & 11.4 & 0.3 & 29.4 \\
        \midrule
        PC-Sampler & \textbf{79.3} & \textbf{34.0} & 28.6 & \textbf{36.3} & \textbf{27.6} & \textbf{41.2} \\
        \rowcolor{Blue!5}
        \hspace{5mm}\textit{w/ \EBoN} & \textcolor[RGB]{178,34,34}{0.5 ↑} & \textcolor[RGB]{178,34,34}{1.0 ↑} & \textcolor[RGB]{178,34,34}{0.2 ↑} & \textcolor[RGB]{178,34,34}{5.9 ↑} & \textcolor[RGB]{178,34,34}{0.6 ↑} & \textcolor[RGB]{178,34,34}{1.6 ↑} \\
        \rowcolor{Blue!5}
        \hspace{5mm}\textit{w/ \ESMC} & \textcolor[RGB]{178,34,34}{1.9 ↑} & \textcolor[RGB]{178,34,34}{1.6 ↑} & \textcolor[RGB]{178,34,34}{0.3 ↑} & \textcolor[RGB]{178,34,34}{4.1 ↑} & \textcolor[RGB]{178,34,34}{1.6 ↑} & \textcolor[RGB]{178,34,34}{1.9 ↑} \\
        \midrule
        \rowcolor{gray!10}
        \multicolumn{7}{c}{\textbf{LLaDA-1.5-8B}} \\
        \midrule
        Uniform & 52.7 & 20.0 & 28.1 & 15.8 & 3.4 & 24.0 \\
        Confidence & 19.2 & 5.4 & \textbf{29.0} & 33.8 & 24.8 & 22.4 \\
        Entropy & 12.1 & 5.0 & 28.8 & 34.7 & 0.2 & 16.2 \\
        Margin & 27.9 & 6.4 & 28.6 & 31.8 & \textbf{33.6} & 25.7 \\
        P2 & 25.4 & 5.6 & 28.8 & 33.6 & 27.8 & 24.2 \\
        EB-Sampler & 12.3 & 4.8 & 28.6 & 34.6 & 1.6 & 16.4 \\
        Semi-AR & 80.7 & 34.2 & 26.1 & 32.4 & 0.0 & 34.7 \\
        Fast-dLLM & 80.8 & 31.2 & 27.9 & 32.9 & 0.4 & 34.6 \\
        \midrule
        PC-Sampler & \textbf{82.2} & \textbf{37.4} & 28.8 & \textbf{35.0} & 33.4 & \textbf{43.4} \\
        \rowcolor{Blue!5}
        \hspace{5mm}\textit{w/ \EBoN} & \textcolor[RGB]{178,34,34}{0.7 ↑} & \textcolor[RGB]{178,34,34}{0.8 ↑} & \textcolor[RGB]{178,34,34}{0.1 ↑} & \textcolor[RGB]{178,34,34}{5.2 ↑} & \textcolor[RGB]{178,34,34}{0.8 ↑} & \textcolor[RGB]{178,34,34}{1.5 ↑} \\
        \rowcolor{Blue!5}
        \hspace{5mm}\textit{w/ \ESMC} & \textcolor[RGB]{178,34,34}{1.0 ↑} & \textcolor[RGB]{178,34,34}{1.2 ↑} & \textcolor[RGB]{178,34,34}{0.2 ↑} & \textcolor[RGB]{178,34,34}{4.3 ↑} & \textcolor[RGB]{178,34,34}{0.2 ↑} & \textcolor[RGB]{178,34,34}{1.4 ↑} \\
        \bottomrule
    \end{tabular}
\end{table}

\emph{\textbf{Path search algorithms enhance decoding strategies.}}
We demonstrate the efficacy of \EBoN and \ESMC through two experiments.
First, to establish their maximum potential, we apply them to PC-Sampler~\citep{huang2025pc} on \texttt{LLaDA} models.
This combination sets a new state-of-the-art, achieving substantial accuracy gains across five challenging benchmarks as shown in Table~\ref{tab:llada_results}.
Second, to validate their broad applicability, we integrate our methods with five different baseline samplers on \texttt{Open-dCoder} model.
As shown by the average accuracy improvements in Figure~\ref{fig:odllm_results}, both algorithms consistently boost every tested sampler, confirming they act as versatile enhancers for a wide array of decoding strategies.

\emph{\textbf{Path-level optimization is effective for complex reasoning and planning.}}
The gains are most pronounced on benchmarks requiring multi-step reasoning and planning.
For example, on \texttt{LLaDA-Instruct-8B}, \ESMC improves GSM8K accuracy from 79.3\% to 81.2\% (\textcolor[RGB]{178,34,34}{+1.9\%}) and the Countdown planning from 36.3\% to 40.4\% (\textcolor[RGB]{178,34,34}{+4.1\%}).
These results demonstrate that performing well on such tasks depends on adopting a global view of the entire solution. Our path search algorithms enable this perspective by utilizing Denoising Entropy to evaluate the full generation path, rather than assessing only a single step.

\subsection{Other Results and Ablation studies}

\paragraph{Path, $\entropyPath$, and accuracy.}
We validate Denoising Entropy as a proxy for task performance on Sudoku, a benchmark sensitive to decoding order.
We use the PC-Sampler hyperparameter $\lambda$ to control decoding sequentiality, thus generating a spectrum of distinct paths.
Smaller $\lambda$ encourage more random decoding orders, which benefit Sudoku solving, while larger $\lambda$ values lead to more sequential decoding, harming performance.
Figure~\ref{fig:path_uncertainty_accuracy} reveals a negative correlation between $\entropyPath$ and final accuracy, confirming that higher entropy signals a lower-quality generation path.

\paragraph{Budget-Efficiency.}
Under the same computational budget (using \texttt{LLaDA-Instruct-8B} with 5 particles), our entropy-guided methods outperform Majority Vote. 
Specifically, on the Sudoku and Countdown task, Majority Vote improves accuracy by (\textcolor[RGB]{70,130,180}{-0.8\%},  \textcolor[RGB]{178,34,34}{+1.0\%}), while \EBoN and \ESMC achieve gains of (\textcolor[RGB]{178,34,34}{+0.6\%}, \textcolor[RGB]{178,34,34}{+5.9\%}) and (\textcolor[RGB]{178,34,34}{+1.6\%}, \textcolor[RGB]{178,34,34}{+4.1\%}), respectively. These results indicate that the entropy-guided approach utilizes the sampling budget more effectively than standard ensembling, suggesting greater potential for scaling.

\begin{figure}[H]
    \centering
    \begin{minipage}[t]{0.60\textwidth}
        \centering
        \includegraphics[height=3.5cm, keepaspectratio]{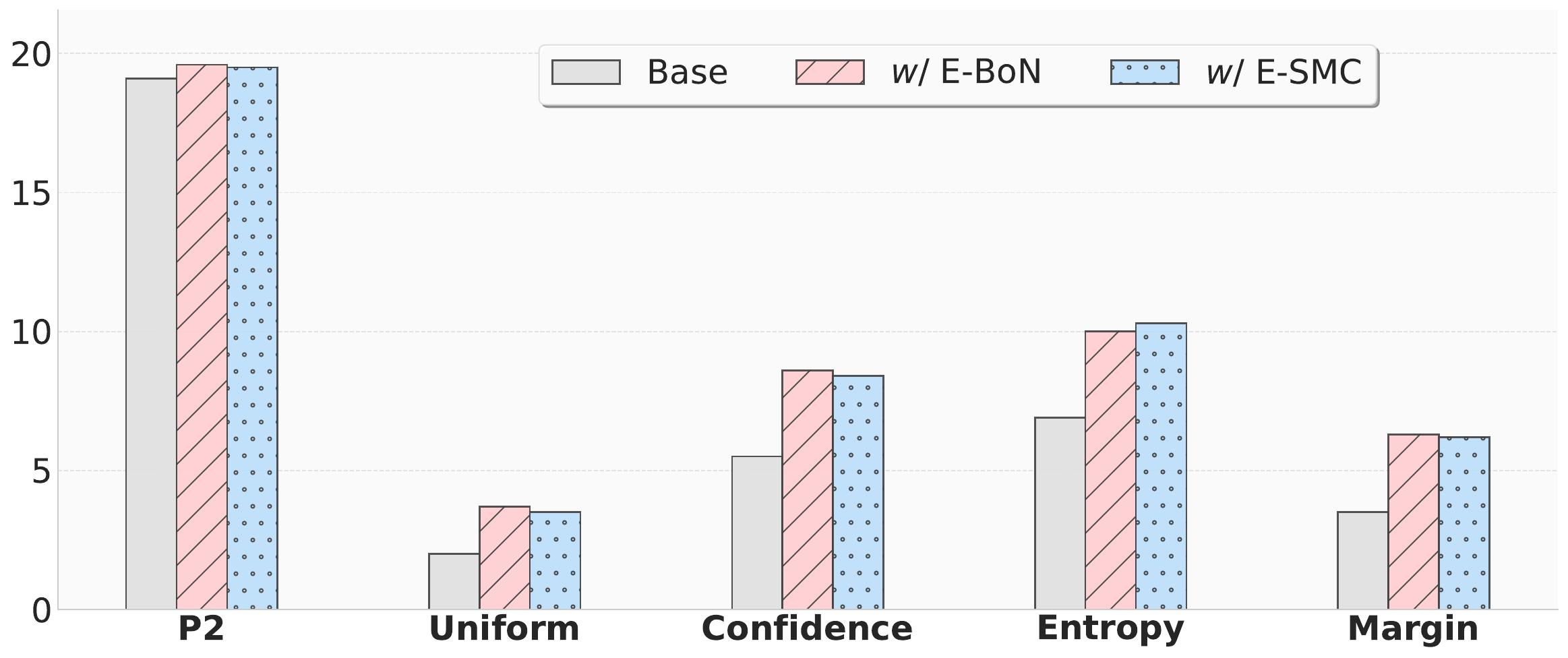}
        \caption{Average accuracy on code benchmarks. \EBoN and \ESMC consistently enhance performance of various baseline decoding strategies, demonstrating their broad applicability.}
        \label{fig:odllm_results}
    \end{minipage}
    \hfill
    \begin{minipage}[t]{0.35\textwidth}
        \centering
        \includegraphics[height=3.5cm, keepaspectratio]{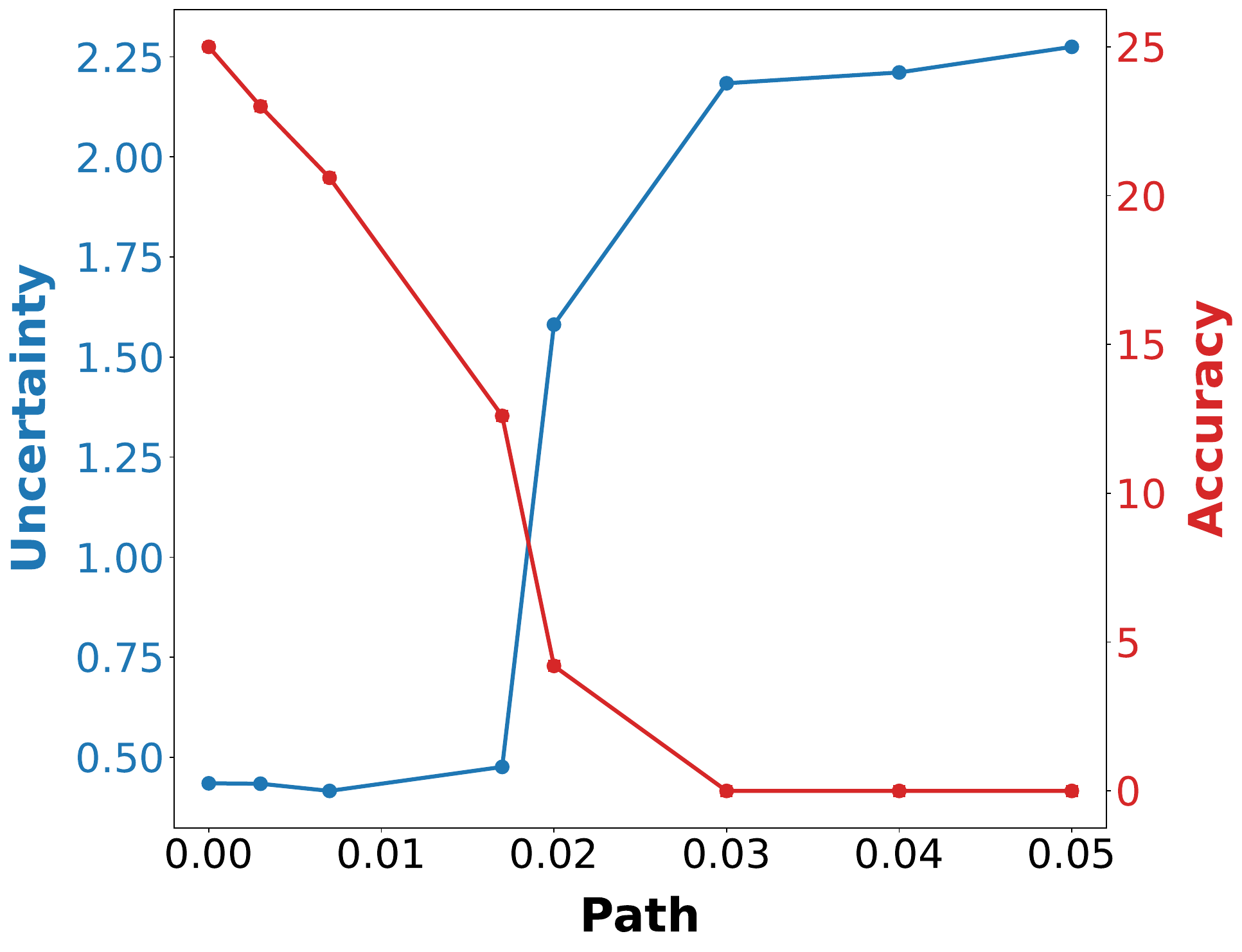}
        \caption{Relationship between decoding path, path uncertainty, and accuracy on Sudoku benchmark. }
        \label{fig:path_uncertainty_accuracy}
    \end{minipage}
\end{figure}

\section{Related Work}
\label{sec:related_work}
\paragraph{Masked Diffusion Models.}
Recent years have seen the extension of diffusion models, originally successful in continuous domains~\citep{ho2020denoising, song2020denoising, dhariwal2021diffusion, rombach2022high}, to discrete generation~\citep{austin2021structured}.
A key branch is Masked Diffusion Models, which was formalized in D3PM~\citep{austin2021structured} and explored in text and sequence generation in DiffusionBERT~\citep{he2022diffusionbert}.
The field matured with substantial theoretical progress simplifying the training objective, making the paradigm more stable and efficient~\citep{zhengreparameterized, sahoo2024simple, shi2024simplified, ou2024your, gat2024discrete}.
This progress has culminated in the development of large-scale MDMs~\citep{nie2025large, zhu2025llada, ye2025dream, gongscaling, gong2025diffucoder} that achieve performance competitive with autoregressive counterparts, establishing MDMs as a viable and powerful paradigm for language generation.

\paragraph{Decoding Strategies for Language Models.}
Decoding strategy is critical for language models.
Unlike ARMs, which only select the next token~\citep{graves2012sequence}, MDMs face a more complex two-dimensional problem: choosing both which position to unmask and what token to generate.
Uncertainty-based greedy approaches are most common, where the next position is selected based on local confidence signals like maximum probability~\citep{chang2022maskgit}, minimum entropy~\citep{koh2024plm, ben2025accelerated}, or largest margin~\citep{kim2025train}, often building upon a simple random-selection baseline~\citep{austin2021structured}.
Other approaches impose structure through a semi-autoregressive block-wise order~\citep{han2022ssd, nie2025large, arriola2025block}, combining positional bias with content-aware confidence scores~\citep{huang2025pc}, or explicitly training a planner~\citep{huang2025reinforcing} to achieve more global control.
Some methods improve flexibility by allowing already-decoded tokens to be remasked~\citep{wang2025remasking, peng2025path}.

\section{Conclusion and Discussion}
\label{sec:conclusion}
Generation uncertainty has remained a critical but unquantified aspect for MDMs.
Our work is the first to address this gap, introducing Denoising Entropy as a metric designed to explicitly quantify the path-level uncertainty of MDM outputs.
We also proposed \EBoN and \ESMC, two algorithms that leverage Denoising Entropy to actively guide the generation process toward paths with lower uncertainty.
Our experiments demonstrate that these methods significantly improve both the general quality of MDM outputs and performance on challenging benchmarks.
Beyond the specific algorithms proposed, we believe Denoising Entropy serves as a foundational tool for uncertainty quantification in MDMs, opening directions such as developing more calibrated decoding strategies and providing internal reward signals for reinforcement learning.

\newpage

\bibliography{resources/reference}
\bibliographystyle{configuration/iclr2026_conference}

\clearpage
\appendix
\onecolumn
{
    \hypersetup{linkcolor=black}
    \parskip=0.3em
    \renewcommand{\contentsname}{Contents}
    \tableofcontents
    \addtocontents{toc}{\protect\setcounter{tocdepth}{3}}
}

\newpage

\section{Theoretical Analysis}

\subsection{Proof of Proposition~\ref{prop:proposition_1}}
\label{app:proof_proposition_1}

As the sample mean of marginal entropy over the masked set $\mathcal{M}_t$, $\entropyState(\mathbf{z}_t)$ is the expected entropy for the single position $\ell$ chosen uniformly from $\mathcal{M}_t$:
\begin{equation*}
    \entropyState(\mathbf{z}_t) \triangleq \frac{1}{|\mathcal{M}_t|} \sum_{\ell \in \mathcal{M}_t} H\left( p_\theta(X_0^\ell | \mathbf{z}_t, t) \right) = \mathbb{E}_{\ell \sim \text{Unif}(\mathcal{M}_t)} \left[ H\left( p_\theta(X_0^\ell | \mathbf{z}_t, t) \right) \right].
\end{equation*}
This establishes $\entropyState$ as a measure of average uncertainty over masked tokens. 

Our proof relies on the subadditivity property of Shannon entropy. We state it here as a lemma.

\begin{lemma}[Subadditivity of Conditional Entropy] 
\label{lemma:subadditivity}
For any set of random variables $\{Y_1, \dots, Y_n\}$ and a conditioning variable $Z$, the joint conditional entropy is bounded by the sum of the individual conditional entropy:
$$
    H(Y_1, \dots, Y_n | Z) \le \sum_{i=1}^n H(Y_i | Z).
$$
Equality holds if and only if the variables $\{Y_i\}$ are conditionally independent given $Z$.
\end{lemma}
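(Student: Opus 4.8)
The plan is to prove subadditivity by combining the chain rule for conditional entropy with the principle that conditioning cannot increase entropy, and then to read off the equality condition from the tightness of each individual step. Writing $Y_{<i} \triangleq (Y_1, \dots, Y_{i-1})$, the first step is to apply the chain rule to decompose the joint conditional entropy exactly:
$$
H(Y_1, \dots, Y_n \mid Z) = \sum_{i=1}^n H(Y_i \mid Y_{<i}, Z).
$$
This identity holds for any discrete random variables and follows by iterating the two-variable chain rule $H(A, B \mid Z) = H(A \mid Z) + H(B \mid A, Z)$.

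The second step is to bound each summand using the fact that conditioning on additional variables never increases entropy:
$$
H(Y_i \mid Y_{<i}, Z) \le H(Y_i \mid Z).
$$
This inequality is equivalent to the non-negativity of the conditional mutual information $I(Y_i; Y_{<i} \mid Z) \ge 0$, which I would justify by Jensen's inequality (equivalently, non-negativity of the KL divergence / Gibbs' inequality) applied to the relevant conditional distributions. Summing these bounds over $i$ and substituting the chain-rule identity immediately yields the claimed bound $H(Y_1, \dots, Y_n \mid Z) \le \sum_{i=1}^n H(Y_i \mid Z)$.

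For the equality case, I would observe that the overall inequality is tight precisely when every summand bound is tight, i.e.\ when $I(Y_i; Y_{<i} \mid Z) = 0$, equivalently $Y_i \perp Y_{<i} \mid Z$, for each $i \in \{2, \dots, n\}$. The main obstacle is to confirm rigorously that this chain of \emph{sequential} conditional-independence statements is equivalent to \emph{full mutual} conditional independence, $p(y_1, \dots, y_n \mid z) = \prod_{i=1}^n p(y_i \mid z)$. I would establish this by a short induction on $i$: assuming $p(y_{<i} \mid z) = \prod_{j<i} p(y_j \mid z)$ together with the step-wise factorization $p(y_i \mid y_{<i}, z) = p(y_i \mid z)$, the product rule extends the full factorization up to index $i$. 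This is the part that demands care, since the tightness conditions must be shown to hold jointly rather than one at a time; every other step reduces to standard information-theoretic identities that may be cited directly.
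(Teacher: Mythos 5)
Your proof is correct. Note that the paper itself does not actually prove this lemma: it states it as a known fact (citing Shannon) and immediately applies it in the proof of Proposition~\ref{prop:proposition_1}, so there is no in-paper argument to compare against. Your argument is the canonical one --- the chain rule $H(Y_1,\dots,Y_n \mid Z) = \sum_i H(Y_i \mid Y_{<i}, Z)$ combined with $H(Y_i \mid Y_{<i}, Z) \le H(Y_i \mid Z)$, i.e.\ non-negativity of conditional mutual information --- and it is complete. You are also right to flag the equality case as the only step needing care: tightness term-by-term gives the \emph{sequential} conditions $Y_i \perp Y_{<i} \mid Z$, and your induction via $p(y_1,\dots,y_n \mid z) = \prod_i p(y_i \mid y_{<i}, z) = \prod_i p(y_i \mid z)$ correctly upgrades these to full mutual conditional independence (the converse direction being immediate by marginalization). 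Nothing is missing.
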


We now restate and prove Proposition~\ref{prop:proposition_1}.

\textbf{Proposition~\ref{prop:proposition_1}.} \textit{Oracle State Uncertainty $H_{\text{oracle}}(\mathbf{z}_t)$ is upper-bounded by the sum of the marginal entropies, which is directly proportional to State Entropy $\entropyState(\mathbf{z}_t)$:}
\begin{equation*}
    H_{\text{oracle}}(\mathbf{z}_t) \le |\mathcal{M}_t| \cdot \entropyState(\mathbf{z}_t).
\end{equation*}

\begin{proof}
By definition, $H_{\text{oracle}}(\mathbf{z}_t)$ is the conditional joint entropy $H(\mathbf{X}_{\mathcal{M}_t} | \mathbf{z}_t)$. We can derive the bound as follows:
\begin{align*}
    H_{\text{oracle}}(\mathbf{z}_t) 
    &= H(\mathbf{X}_{\mathcal{M}_t} | \mathbf{z}_t) 
    \tag*{by Definition~\ref{def:oracle_state_uncertainty}} \\
    &\le \sum_{\ell \in \mathcal{M}_t} H(X_0^\ell | \mathbf{z}_t) 
    \tag*{by Lemma~\ref{lemma:subadditivity}} \\
    &= \sum_{\ell \in \mathcal{M}_t} H\left( p_\theta(X_0^\ell | \mathbf{z}_t, t) \right) 
    \tag*{by definition of MDM output} \\
    &= |\mathcal{M}_t| \cdot \left( \frac{1}{|\mathcal{M}_t|} \sum_{\ell \in \mathcal{M}_t} H\left( p_\theta(X_0^\ell | \mathbf{z}_t, t) \right) \right) \\
    &= |\mathcal{M}_t| \cdot \entropyState(\mathbf{z}_t).
    \tag*{by Definition~\ref{def:entropy_state}}
\end{align*}
The inequality in the second step is strict if the random variables $\{X_0^\ell\}_{\ell \in \mathcal{M}_t}$ are not conditionally independent given the latent state $\mathbf{z}_t$. In the context of natural language, where token occurrences are highly correlated, this condition for strict inequality is almost always met.
\end{proof}

\subsection{Proof and Corollary of Proposition~\ref{prop:proposition_2}}
\label{app:proof_proposition_2}

Here, we provide the detailed proof for Proposition~\ref{prop:proposition_2} and discuss the correlation between negative evidence lower bound $\mathcal{L}$ and $\entropyPath$.

The proof relies on the decomposition of cross-entropy loss: 
\begin{lemma}[Decomposition of Cross-Entropy Loss]
    Let $q(X)$ be the true data distribution and $p_\theta(X)$ be the model's estimation distribution. For any random variable $X$, the cross-entropy loss can be decomposed as:
    \begin{equation*}
        \mathbb{E}_{X_0^\ell \sim q(\cdot|\mathbf{z}_t)}[-\log p_\theta(X_0^\ell | \mathbf{z}_t, t)] = H(q) + D_{\text{KL}}(q \parallel p_\theta(X_0^\ell | \mathbf{z}_t, t)),
    \end{equation*}
    where $q = q(X_0^\ell | \mathbf{z}_t)$ denotes the true posterior distribution, $H(\cdot)$ is the Shannon entropy and $D_{\text{KL}}(\cdot \parallel \cdot)$ is the Kullback-Leibler divergence.
\end{lemma}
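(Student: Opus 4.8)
The plan is to prove this as the classical cross-entropy decomposition by direct manipulation of the defining expectation. Throughout I fix the latent state $\mathbf{z}_t$ and time $t$, and abbreviate $q(\cdot) = q(X_0^\ell \mid \mathbf{z}_t)$ and $p_\theta(\cdot) = p_\theta(X_0^\ell \mid \mathbf{z}_t, t)$, both of which are probability distributions over the vocabulary of size $K$. The single trick needed is to insert $q$ inside the logarithm so that the Shannon entropy and KL divergence terms emerge automatically.

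Concretely, I would first expand the left-hand side as a finite sum over the $K$ vocabulary entries, $\mathbb{E}_{X_0^\ell \sim q}[-\log p_\theta(X_0^\ell)] = -\sum_{k} q(k)\log p_\theta(k)$, and then rewrite each logarithm via the identity $\log p_\theta(k) = \log q(k) - \log\frac{q(k)}{p_\theta(k)}$. Substituting and splitting the sum gives
\begin{equation*}
    -\sum_{k} q(k)\log p_\theta(k) = -\sum_{k} q(k)\log q(k) + \sum_{k} q(k)\log\frac{q(k)}{p_\theta(k)}.
\end{equation*}
The first term is by definition the Shannon entropy $H(q)$, and the second is by definition the Kullback--Leibler divergence $D_{\mathrm{KL}}(q \parallel p_\theta)$; together they yield the claimed identity.

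There is no substantive obstacle here: the argument is a one-line rearrangement of finite sums and the statement is a standard information-theoretic identity. The only care required is bookkeeping---taking the expectation under the \emph{true} posterior $q(\cdot \mid \mathbf{z}_t)$ rather than the model distribution, and conditioning both $q$ and $p_\theta$ on the same state $\mathbf{z}_t$ so that $\supp(q) \subseteq \supp(p_\theta)$ and the KL term remains finite and the manipulation well-defined. This lemma then plugs directly into \propref{prop:proposition_2}: the $\epsilon$-accuracy assumption bounds the KL term, after which $\entropyState$ approximates the per-token normalized loss up to the stated error.
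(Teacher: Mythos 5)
Your proposal is correct: the identity is the classical decomposition of cross-entropy into Shannon entropy plus KL divergence, obtained exactly as you describe by inserting $\log q(k)$ inside the logarithm and splitting the sum. The paper states this lemma without proof (treating it as standard), and your derivation is precisely the argument it implicitly relies on, including the support condition $\supp(q)\subseteq\supp(p_\theta)$ needed for the manipulation to be well-defined.
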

The proposition states that with the assumption of $\epsilon$-accurate model, the gap between $\entropyState$ and training loss can be upper-bounded:
\begin{equation}
    \frac{1}{|\mathcal{M}_t|} \sum_{\ell \in \mathcal{M}_t} \left( -\log p_{\mtheta}(x_0^\ell | \mathbf{z}_t, t) \right)-\entropyState(\mathbf{z}_t) \leq \epsilon+\sqrt{\frac{\epsilon}{2}}logK \,.
\end{equation}
\begin{proof}
For an $\epsilon$-accurate model, the KL divergence between the posterior distribution $q(x_0^\ell | \mathbf{z}_t)$ and the estimation distribution $p_\theta(x_0^\ell | z_t,t)$, which is less than $D_{\mathrm{KL}}\Big( q(\mathbf{z}_{t_{i-1}} | \mathbf{z}_{t_i}) \parallel p_\theta(\mathbf{z}_{t_{i-1}} | \mathbf{z}_{t_i}) \Big)$, is upper-bounded by $\epsilon$:
\begin{equation}
    D_{\text{KL}}(q(x_0^\ell | \mathbf{z}_t) \parallel p_\theta(x_0^\ell |\mathbf{z}_t,t)) \leq \epsilon\,.
\end{equation}

With the upper bound as above, we turn to bound the gap between entropy $H(p_\theta(x_0^\ell |\mathbf{z}_t,t))$ and $H(q(x_0^\ell | \mathbf{z}_t))$. We use $q$, $p_{\theta}$ for simplification.

Let $\delta(q,p_\theta)\triangleq \frac{1}{2} \sum_{x \in\mathcal{V}} |q(x) - p_{\theta}(x)|$ be the total variation distance on the size-$K$ vocabulary set $\mathcal{V}$.
Since $D_{\text{KL}}(q \parallel p_{\theta})\leq \epsilon$, we obtain $\delta(q, p_\theta) \leq \sqrt{\frac{\epsilon}{2}}$ based on Pinsker's Inequality \citep{csiszar2011information}.

Using Audenaert-Fannes Inequality \citep{audenaert2006sharp}, we obtain the bound of shannon entropy gap:
\begin{equation}
|H(q) - H(p_{\theta})| \leq \delta \log(K - 1) + h(\delta),
\end{equation}
where $h(\delta)$ is binary entropy function $h(\delta)=-\delta log\delta-(1-\delta)log(1-\delta)$. Since $h(\delta)$ (bounded with $\delta \leq \sqrt{\frac{\epsilon}{2}}$) is small term that approches $0$ when $\epsilon$
approaches $0$, we omit this term in the following inequality. Thus, we have $|H(q) - H(p_{\theta})| \leq \delta \log(K)$.

Putting the above inequalities together, we obtain the final result:
\begin{equation}
    \mathbb{E}_{X_0^\ell \sim q(\cdot|\mathbf{z}_t)}[-\log p_\theta(X_0^\ell | \mathbf{z}_t, t)]\leq \entropyState(\mathbf{z}_t) + \epsilon+\sqrt{\frac{\epsilon}{2}}logK.
\end{equation}

By the law of large numbers, the average loss over masked positions approximates the expectation:
\begin{align*}
\frac{1}{|\mathcal{M}_t|} \sum_{\ell \in \mathcal{M}_t} \left( -\log p_{\mtheta}(x_0^\ell | \mathbf{z}_t, t) \right)
\approx \mathbb{E}_{X_0^\ell \sim q(\cdot|\mathbf{z}_t)}[-\log p_\theta(X_0^\ell | \mathbf{z}_t, t)], \\
\frac{1}{|\mathcal{M}_t|} \sum_{\ell \in \mathcal{M}_t} \left( -\log p_{\mtheta}(x_0^\ell | \mathbf{z}_t, t) \right)-\entropyState(\mathbf{z}_t) \leq \epsilon+\sqrt{\frac{\epsilon}{2}}logK.
\end{align*}

\end{proof}

\paragraph{Justification for the Correlation between $\mathcal{L}$ and $\entropyPath$.}

The full objective (negative evidence lower bound, NELBO) $\mathcal{L}$ and our metric $\entropyPath(\tau)$ are strongly correlated, which is direct corollary of Proposition~\ref{prop:proposition_2}.

We begin with the definition of the NELBO objective:
\begin{equation}
    \mathcal{L}(\mathbf{x}_0) = \int_0^1 w(t) \left[ \sum_{\ell \in \mathcal{M}_t} -\log p_\theta(x_0^\ell | \mathbf{z}_t, t) \right] dt,
\end{equation}
where the weighting function is $w(t) = |\frac{d\alpha_t}{dt}|\frac{1}{1 - \alpha_t}$.
Assume that $\epsilon\to 0$, we take the approximation by $\entropyState$ from Proposition~\ref{prop:proposition_2}:
\begin{equation}
    \sum_{\ell \in \mathcal{M}_t} -\log p_\theta(x_0^\ell | \mathbf{z}_t, t) \approx |\mathcal{M}_t| \cdot \entropyState(\mathbf{z}_t).
\end{equation}
Substituting this back into the $\mathcal{L}(\mathbf{x}_0)$:
\begin{equation}
    \mathcal{L} \approx \int_0^1 w(t) \cdot |\mathcal{M}_t| \cdot \entropyState(\mathbf{z}_t) dt.
\end{equation}
In expectation, $|\mathcal{M}_t| \approx L(1-\alpha_t)$, where $L$ is the sequence length. Thus, the term $w(t) \cdot |\mathcal{M}_t| \approx L \cdot |\frac{d\alpha_t}{dt}|$ is a strictly positive weighting function of time $t$, which we denote as $w'(t)$. Then the objective is approximated as follows:
\begin{equation}
    \mathcal{L} \approx \int_0^1 w'(t) \cdot \entropyState(\mathbf{z}_t) dt.
\end{equation}
In parallel, our metric is the unweighted integral:
\begin{equation}
    \entropyPath(\tau) = \int_0^1 \entropyState(\mathbf{z}_t) dt.
\end{equation}
Since both $\mathcal{L}$ and $\entropyPath(\tau)$ are integrals of the same underlying function, $\entropyState(\mathbf{z}_t)$, with one being uniformly weighted and the other weighted by a positive function $w'(t)$, a strong positive correlation between their values is mathematically expected. This provides rigorous justification for the remark.

In this section, we provide a rigorous theoretical justification for minimizing Path Uncertainty as a proxy for improving generation quality. We establish a direct link between our proposed metric and the divergence from the true data distribution.

\subsection{Distributions over Paths}

To rigorously compare the uncertainty of generated samples against real data, we first align their metrics. While $\entropyPath(\tau)$ is defined over a decoding path $\tau = (\mathbf{z}_{t_N}, \dots, \mathbf{z}_{t_0})$, real data only consists of static sample sequence $\mathbf{x}_0$.
We bridge this gap by defining the \textbf{True Path Distribution} via the forward diffusion process.

\begin{definition}[Path Distributions]
\label{def:path_distributions}
We consider two distributions over the space of paths:

\begin{itemize}[nosep, leftmargin=12pt]
    \item \textbf{Model Distribution ($\widehat{\mathcal{T}}$):} The distribution of paths generated by the model's reverse process, starting from Gaussian noise:
    \begin{equation}
        \widehat{\Pr}(\tau) = p(\mathbf{z}_{t_N}) \prod_{i=N}^{1} p_\theta(\mathbf{z}_{t_{i-1}} | \mathbf{z}_{t_i}) \, ,
    \end{equation}
    where the starting state is sampled from the prior $\mathbf{z}_{t_N} \sim p(\cdot)$.
    \item \textbf{Reference (True) Distribution ($\mathcal{T}^*$):} The distribution of paths induced by applying the forward corruption process $q(\mathbf{z}_t | \mathbf{x}_0)$ to ground-truth data $\mathbf{x}_0 \sim \mathcal{D}_{real}$:
    \begin{equation}
        \Pr(\tau) = q(\mathbf{x}_0) \prod_{i=1}^{N} q(\mathbf{z}_{t_i} | \mathbf{z}_{t_{i-1}}, \mathbf{x}_0) \,,
    \end{equation}
    where $\mathbf{x}_0 \sim q(\cdot)$ is sampled from the real dataset $\mathcal{D}_{\text{real}}$.
\end{itemize}
\end{definition}

Here, $\Pr(\tau)$ represents the distributions of the \textit{ideal} paths that stay perfectly on the manifold of the forward process. For a well-trained model, evaluating $\entropyPath$ on the reversed reference paths should yield low uncertainty, as the states are consistent with the training objective. Conversely, generated paths $\tau \sim \widehat{\Pr}$ that deviate from this manifold (e.g., via hallucinations) are expected to induce higher model uncertainty.

The discrepancy between the generated paths and the true data manifold originates from imperfect approximation at each denoising step. Assume that MEM is $\epsilon$-accurate.
Although $\epsilon$ is small locally, the generation process involves a sequence of $N$ transitions. For the full path distributions defined as $\Pr(\tau)$ (Reference) and $\widehat{\Pr}(\tau)$ (Model), the chain rule of KL divergence implies that errors accumulate additively:
\begin{equation}
\label{eq:path_divergence}
    D_{\mathrm{KL}}(\Pr(\tau) \| \widehat{\Pr}(\tau)) = \sum_{i=N}^1 \mathbb{E}_{\mathbf{z}_{t_i} \sim q} \left[ D_{\mathrm{KL}}(q(\cdot|\mathbf{z}_{t_i}) \| p_\theta(\cdot|\mathbf{z}_{t_i})) \right] \leq N \cdot \epsilon \,.
\end{equation}
Equation~\ref{eq:path_divergence} establishes that even a well-trained model ($\epsilon \to 0$) can exhibit a non-negligible path divergence $N\epsilon$ due to the cumulative nature of the generative process. This macroscopic divergence is the root cause of the drift.

\subsection{Entropy Drift and The Divergence Bounds}
\label{subsec:entropy_bound}

\subsubsection{Proof of Proposition~\ref{thm:quality_bound}}
\label{app:proof_theorem_1}
We formally quantify the entropy discrepancy as \textbf{Entropy Drift}. Let $\mu_{\Pr} = \mathbb{E}_{\tau \sim \Pr}[\entropyPath(\tau)]$ be the expected Path Entropy evaluated on reference paths (representing the model's uncertainty when guided by ground truth), and let $\mu_{\widehat{\Pr}} = \mathbb{E}_{\tau \sim \widehat{\Pr}}[\entropyPath(\tau)]$ be the expected Path Entropy of the model's autonomous generations.
\begin{proof}
We now show that the gap between the two Path Entropy quantities lower-bounds the divergence between the generated distribution and the true distribution.
\begin{lemma}[Pinsker's Inequality]
\label{lemma:pinsker_expectation}
Let $P$ and $Q$ be two distributions over the path space. For any bounded measurable functional $f(\tau)$ such that $|f(\tau)| \leq B$, the difference in expectations is bounded by the KL divergence:
\begin{equation}
    \left| \mathbb{E}_{\tau \sim P}[f(\tau)] - \mathbb{E}_{\tau \sim Q}[f(\tau)] \right| \leq B \sqrt{2 D_{\mathrm{KL}}(P \| Q)} \,.
\end{equation}
\end{lemma}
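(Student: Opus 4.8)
The plan is to complete the proof of \propref{thm:quality_bound} by instantiating the just-stated \lemref{lemma:pinsker_expectation} at the specific functional $f = \entropyPath$ and then squaring. The only ingredient that genuinely needs an argument is the Lemma itself, so I would first dispatch it, and everything after is bookkeeping.

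To establish the Lemma, I would route through the total variation distance $\delta_{\mathrm{TV}}(P,Q)$. Since $|f|\le B$, the difference of expectations is controlled by the $L^1$ distance of the two measures,
\[
    \left| \mathbb{E}_{\tau\sim P}[f(\tau)] - \mathbb{E}_{\tau\sim Q}[f(\tau)] \right| = \left| \int f \, d(P-Q) \right| \le B \int \left| dP - dQ \right| = 2B\,\delta_{\mathrm{TV}}(P,Q).
\]
I would then invoke the classical Pinsker inequality $\delta_{\mathrm{TV}}(P,Q) \le \sqrt{\tfrac12 D_{\mathrm{KL}}(P\|Q)}$, already used earlier in the paper, and compose the two bounds. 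The factor-of-two from the range of $f$ and the $1/\sqrt{2}$ from Pinsker combine cleanly to yield $\bigl|\mathbb{E}_P[f]-\mathbb{E}_Q[f]\bigr| \le B\sqrt{2\,D_{\mathrm{KL}}(P\|Q)}$, which is exactly the Lemma. The only care point is tracking that constant through the two inequalities.

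To conclude the Proposition, I would apply the Lemma with $P=\Pr$, $Q=\widehat{\Pr}$, and $f(\tau)=\entropyPath(\tau)$; this is admissible precisely because the hypothesis $|\entropyPath(\tau)|\le B$ is the boundedness the Lemma demands. This gives $|\mu_{\Pr}-\mu_{\widehat{\Pr}}| \le B\sqrt{2\,D_{\mathrm{KL}}(\Pr\|\widehat{\Pr})}$. Squaring both sides (both nonnegative) and dividing by $2B^2$ rearranges into
\[
    D_{\mathrm{KL}}(\Pr\|\widehat{\Pr}) \ge \frac{1}{2B^2}\left(\mu_{\widehat{\Pr}} - \mu_{\Pr}\right)^2,
\]
which is the claimed quality bound.

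I do not anticipate a real obstacle here: the proof is an assembly of two standard inequalities, and the main thing to watch is the \emph{direction} of the final step. The Lemma is an upper bound on the entropy gap in terms of $\sqrt{D_{\mathrm{KL}}}$, so squaring and rearranging flips it into a \emph{lower} bound on $D_{\mathrm{KL}}$, which is exactly the interpretation the paper wants: closing the gap $|\mu_{\widehat{\Pr}}-\mu_{\Pr}|$ is \emph{necessary} for small divergence from the true path distribution. I would also remark in passing that the bound is consistent with the earlier accumulation estimate $D_{\mathrm{KL}}(\Pr\|\widehat{\Pr}) \le N\epsilon$, so the two together sandwich the admissible entropy drift.
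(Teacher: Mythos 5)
Your proof of the lemma is correct: bounding the expectation gap by $B\int\lvert dP-dQ\rvert = 2B\,\delta_{\mathrm{TV}}(P,Q)$ and then invoking classical Pinsker, $\delta_{\mathrm{TV}}\le\sqrt{\tfrac12 D_{\mathrm{KL}}(P\|Q)}$, composes to exactly the stated constant $B\sqrt{2D_{\mathrm{KL}}}$. Note that the paper itself never proves this lemma---it is stated inside the proof of Proposition~\ref{thm:quality_bound} as a known fact and immediately applied with $f=\entropyPath$---so you are supplying the standard argument the authors omit, and it is consistent with the form of Pinsker's inequality they use elsewhere (in the proof of Proposition~\ref{prop:proposition_2}); your subsequent application, squaring, and rearrangement match the paper's derivation of the quality bound exactly.
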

Applying Lemma~\ref{lemma:pinsker_expectation} with $f(\tau) := \entropyPath(\tau)$, we obtain the following bound:
\begin{equation*}
    D_{\mathrm{KL}}(\Pr \| \widehat{\Pr}) \geq \frac{1}{2B^2} \left( \mu_{\widehat{\Pr}} - \mu_{\Pr} \right)^2.
\end{equation*}\end{proof}

\textbf{Remark.}
Equation ablove establishes a necessary condition for high-fidelity generation.
Ideally, we expect the generated distribution to match the reference distribution ($D_{\mathrm{KL}} \to 0$), but
Theorem~\ref{thm:quality_bound} implies that it is impossible unless the entropy of the generated paths matches the entropy of the reference paths ($\mu_{\widehat{\Pr}} \to \mu_{\Pr}$).
Since reference paths $\Pr$ correspond to ground-truth data evolution where the model is typically confident (low $\mu_{\Pr}$), and autonomous generations often drift into high-uncertainty regions (high $\mu_{\widehat{\Pr}}$), minimizing the expected Path Entropy $\mu_{\widehat{\Pr}}$ effectively reduces the gap, implicitly constraining the generative process closer to the true data manifold.

\subsubsection{Entropy Gap Lower Bounds Divergence}
Since computing the high-dimensional divergence $D_{\mathrm{KL}}(\Pr \| \widehat{\Pr})$ directly is intractable, we propose using Path Entropy as a measurable proxy.
We formally show that the KL divergence is upper-bounded with $\epsilon$-accurate condition and lower-bounded with the measurable gap in expected path entropy.

\begin{theorem}[Bounds of KL Divergence]
\label{thm:quality_bound_divergence}
Let $\Pr$ and $\widehat{\Pr}$ be the reference and generated distributions over path space;
$\mu_{\Pr}=\mathbb{E}_{\tau\sim\Pr}[\entropyPath(\tau)]$ and $\mu_{\widehat{\Pr}}=\mathbb{E}_{\tau\sim\widehat{\Pr}}[\entropyPath(\tau)]$ be the Path Entropy expectation on the reference and model distributions, respectively. \\
Assume that MDM is $\epsilon-\text{accurate}$ for the model's reverse kernel $p_{\theta}$ and true transition kernel $q$, the KL-divergence of distributions $\Pr$ and $\widehat{\Pr}$ is upper-bounded as in Eq.~\ref{eq:path_divergence}. 
By applying Pinsker's Inequality to path functionals, the divergence is lower-bounded by the squared entropy gap:
\begin{equation}
\label{eq:combined_bound}
    N \cdot \epsilon \geq D_{\mathrm{KL}}(\Pr \| \widehat{\Pr}) \geq \frac{1}{2B^2} \left( \mu_{\widehat{\Pr}} - \mu_{\Pr} \right)^2 \,.
\end{equation}
\end{theorem}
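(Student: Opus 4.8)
The plan is to assemble the two-sided bound by stitching together two facts already in place in the preceding development: an upper bound coming from the per-step accuracy of the model, and a lower bound coming from the expectation form of Pinsker's inequality. I would prove each inequality separately and then chain them, since the theorem is precisely the conjunction of these two estimates.

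For the upper bound $D_{\mathrm{KL}}(\Pr \| \widehat{\Pr}) \leq N\epsilon$, I would first factorize both path measures in reverse-time order, so that $\widehat{\Pr}(\tau) = p(\mathbf{z}_{t_N}) \prod_{i=N}^{1} p_\theta(\mathbf{z}_{t_{i-1}} \mid \mathbf{z}_{t_i})$ and the reference measure $\Pr(\tau)$, though defined through the forward corruption kernels, is re-expressed via its reverse posterior into the matching product form with transitions $q(\mathbf{z}_{t_{i-1}} \mid \mathbf{z}_{t_i})$. The chain rule for KL divergence then telescopes the total divergence into a sum of per-step conditional divergences plus a boundary term comparing the two initial laws at $t_N$; because both the generated and reference paths start from the same fully-masked state, that boundary term vanishes, leaving exactly the decomposition asserted in Eq.~\ref{eq:path_divergence}. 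Bounding each of the $N$ summands by $\epsilon$ through the $\epsilon$-accuracy definition then gives $D_{\mathrm{KL}}(\Pr \| \widehat{\Pr}) = \sum_{i=N}^{1} \mathbb{E}_{\mathbf{z}_{t_i} \sim q}[D_{\mathrm{KL}}(q(\cdot \mid \mathbf{z}_{t_i}) \| p_\theta(\cdot \mid \mathbf{z}_{t_i}))] \leq N\epsilon$.

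For the lower bound I would invoke Proposition~\ref{thm:quality_bound} directly: taking the bounded functional $f(\tau) = \entropyPath(\tau)$ with $|f| \leq B$ and applying the expectation form of Pinsker's inequality (Lemma~\ref{lemma:pinsker_expectation}) to the pair $(\Pr, \widehat{\Pr})$ yields $|\mu_{\widehat{\Pr}} - \mu_{\Pr}| \leq B\sqrt{2 D_{\mathrm{KL}}(\Pr \| \widehat{\Pr})}$, which squares and rearranges to $D_{\mathrm{KL}}(\Pr \| \widehat{\Pr}) \geq \frac{1}{2B^2}(\mu_{\widehat{\Pr}} - \mu_{\Pr})^2$. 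Concatenating this with the upper bound produces the claimed chain $N\epsilon \geq D_{\mathrm{KL}}(\Pr \| \widehat{\Pr}) \geq \frac{1}{2B^2}(\mu_{\widehat{\Pr}} - \mu_{\Pr})^2$, establishing Eq.~\ref{eq:combined_bound}.

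The main obstacle is the chain-rule step for the upper bound, specifically justifying the re-factorization of the forward-defined reference distribution $\Pr(\tau)$ into a reverse-time product whose transition kernels $q(\mathbf{z}_{t_{i-1}} \mid \mathbf{z}_{t_i})$ are directly comparable to the model kernels $p_\theta(\mathbf{z}_{t_{i-1}} \mid \mathbf{z}_{t_i})$ appearing in the $\epsilon$-accuracy assumption, together with confirming that the endpoint prior terms cancel so that only the $N$ per-step divergences survive. Once that decomposition is secured, the lower bound and the final combination are routine consequences of Lemma~\ref{lemma:pinsker_expectation} and simple algebra.
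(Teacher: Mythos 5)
Your proposal matches the paper's own argument: the upper bound is exactly Eq.~\ref{eq:path_divergence} (chain rule of KL over the reverse-time factorization plus the $\epsilon$-accuracy bound on each of the $N$ summands), and the lower bound is Proposition~\ref{thm:quality_bound}, i.e.\ Lemma~\ref{lemma:pinsker_expectation} applied to $f(\tau)=\entropyPath(\tau)$ with $|f|\leq B$, after which the two inequalities are simply concatenated. If anything, you are slightly more careful than the paper, which asserts the reverse-time re-factorization of $\Pr(\tau)$ and the vanishing of the boundary term at $t_N$ without comment, whereas you correctly flag this as the step requiring justification.
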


\textbf{Implication:}
This inequality chain reveals the full mechanism:
\begin{itemize}[nosep, leftmargin=12pt]
    \item \textbf{Source:} Local approximation errors ($\epsilon$) accumulate over $N$ steps.
    \item \textbf{State:} This accumulation creates a distributional shift $D_{\mathrm{KL}}(\Pr \| \widehat{\Pr})$.
    \item \textbf{Observation:} This shift forces drift in observable statistics, specifically creating a gap $|\mu_{\widehat{\Pr}} - \mu_{\Pr}|$.
\end{itemize}
Therefore, if we observe a large entropy gap (Entropy Drift), it implies that the model has deviated significantly from the reference path. Conversely, our algorithms aim to reduce this entropy gap $|\mu_{\widehat{\Pr}} - \mu_{\Pr}| \to 0$, which is a necessary condition for tightening the divergence bound and mitigating the accumulated error.

\subsection{Entropy Drift in Decoding Algorithms}
\label{subsec:alg_justification}

Motivated by Theorem~\ref{thm:quality_bound}, our algorithms, \EBoN and \ESMC, aim to bridge the divergence gap by aligning the expected path entropy of the generated distribution $\mu_{\widehat{\Pr}}$ with that of the reference $\mu_{\Pr}$.

\paragraph{\EBoN.}
\EBoN approximates a truncated distribution $\widehat{\Pr}_{\EBoN}$ by rejecting paths with high $\entropyPath$.
A potential concern is that the algorithm might retain path samples with extremely low uncertainty (e.g., repetition).
For a well-designed model $\widehat{\Pr}$, the sampling probability of such paths is low. The model tends to sample problematic sequences with high entropy or sequences with normal entropy; then the finite candidate path set $S=\{\tau_1,...\tau_N\}$ is unlikely to contain path samples with extremely low entropy.
\EBoN reduces the expectation $\mathbb{E}_{\tau \sim \widehat{\Pr}_{\EBoN}}[\entropyPath(\tau)]$. Considering that $S$ consists solely of samples from the region where $\mu_{\widehat{\Pr}}\leq \mu_{\Pr}$, the selection process brings it closer to the reference baseline $\mu_{\Pr}$. By Theorem~\ref{thm:quality_bound}, bridging this path entropy gap is a prerequisite for reducing the distributional divergence.

\paragraph{\ESMC.}
\ESMC modifies the transition kernel by re-weighting particles with $w \propto \exp(-\lambda \entropyState)$ at fixed intervals of $\Delta_{i_r}$ steps, which can be viewed as constructing a calibrated distribution $\widehat{\Pr}_{\ESMC}$ that penalizes transitions into high-entropy states.
By actively avoiding regions where the model is uncertain (which are characteristic of $\widehat{\Pr}$ but rare in $\Pr$), \ESMC promotes that the accumulated Path Entropy remains low. 
A similar potential concern is that when the entropy of $\mu_{\widehat{\Pr}}$ is lower than the reference $\mu_{\Pr}$, reducing the entropy $\mu_{\widehat{\Pr}}$ would increase the gap. 
There we can set the temperature $\lambda$ near the optimal one $\lambda^*$, which enables $\mu_{\widehat{\Pr}}=\mu_{\Pr}$ and the entropy gap is zero.
The existence of $\lambda^*$ can be verified as follows:
\begin{theorem}[Existence and Uniqueness of Optimal Temperature]
\label{thm:exist_unique}
Let $\mu_{\widehat{\Pr}}(\lambda)$ denote the expected path entropy of the generative process under the \ESMC algorithm with temperature parameter $\lambda \in [0, \infty)$. 
There exists a unique optimal temperature $\lambda^* \in (0, \infty)$ such that the expected generated entropy matches the reference entropy:
\begin{equation}
\mu_{\widehat{\Pr}}(\lambda^*) = \mu_{\Pr}\, .
\end{equation}
\end{theorem}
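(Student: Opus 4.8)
The plan is to prove Theorem~\ref{thm:exist_unique} (existence and uniqueness of the optimal temperature $\lambda^*$) via the intermediate value theorem combined with a strict-monotonicity argument, so I must establish that $\mu_{\widehat{\Pr}}(\lambda)$ is a continuous, strictly decreasing function of $\lambda$ on $[0,\infty)$ whose range brackets the target value $\mu_{\Pr}$. First I would set up the dependence of the generated path distribution on $\lambda$ explicitly: under \ESMC, the resampling weights are $w \propto \exp(-\lambda\, \entropyState(\mathbf{z}))$, so the effective (calibrated) path measure is a tilted distribution $\widehat{\Pr}_\lambda(\tau) \propto \widehat{\Pr}(\tau)\exp\!\big(-\lambda\, \entropyPath(\tau)\big)$ in the idealized (continuous-resampling) limit. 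Writing $\mu_{\widehat{\Pr}}(\lambda) = \mathbb{E}_{\tau\sim\widehat{\Pr}_\lambda}[\entropyPath(\tau)]$, this is the expectation of the sufficient statistic under an exponential family tilt of $\widehat{\Pr}$ by $\entropyPath$.

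The key steps, in order, are as follows. Step one: identify $\mu_{\widehat{\Pr}}(\lambda)$ as the derivative (up to sign) of the log-partition function $\psi(\lambda) = \log \mathbb{E}_{\tau\sim\widehat{\Pr}}[\exp(-\lambda\,\entropyPath(\tau))]$, namely $\mu_{\widehat{\Pr}}(\lambda) = -\psi'(\lambda)$. Step two: invoke the standard convexity fact that $\psi$ is convex and, provided $\entropyPath$ is not $\widehat{\Pr}$-almost-surely constant, strictly convex; hence $-\psi'$ is strictly decreasing, giving strict monotonicity of $\mu_{\widehat{\Pr}}(\lambda)$ in $\lambda$. Concretely, $\tfrac{d}{d\lambda}\mu_{\widehat{\Pr}}(\lambda) = -\mathbb{V}_{\tau\sim\widehat{\Pr}_\lambda}[\entropyPath(\tau)] < 0$, using the assumption (stated in Section~\ref{sec:guided_decoding}) that a stochastic strategy yields positive variance in path uncertainty. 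Step three: pin down the boundary values. At $\lambda=0$ no tilting occurs, so $\mu_{\widehat{\Pr}}(0) = \mathbb{E}_{\tau\sim\widehat{\Pr}}[\entropyPath(\tau)] = \mu_{\widehat{\Pr}}$, the uncorrected (high) model entropy; as $\lambda\to\infty$ the tilt concentrates on the minimal-entropy paths, so $\mu_{\widehat{\Pr}}(\lambda)$ decreases toward $\inf_\tau \entropyPath(\tau)$. Step four: argue that $\mu_{\Pr}$ lies strictly between these extremes — the reference entropy is below the uncorrected model entropy (the drift premise $\mu_{\Pr} < \mu_{\widehat{\Pr}}$ from Section~\ref{subsec:entropy_bound}) and strictly above the infimum (since $\mu_{\Pr}$ is an average over a nondegenerate reference distribution, not the single lowest-entropy outlier). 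Continuity of $\mu_{\widehat{\Pr}}(\lambda)$ (differentiability under the integral, justified by the boundedness $|\entropyPath|\le B$) plus the intermediate value theorem then yields existence of $\lambda^*\in(0,\infty)$ with $\mu_{\widehat{\Pr}}(\lambda^*)=\mu_{\Pr}$, and strict monotonicity gives uniqueness.

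I expect the main obstacle to be Step four, specifically rigorously justifying that $\mu_{\Pr}$ sits strictly inside the attainable range $\big(\inf_\tau\entropyPath(\tau),\ \mu_{\widehat{\Pr}}(0)\big)$. The upper inequality $\mu_{\Pr} < \mu_{\widehat{\Pr}}(0)$ is essentially the paper's guiding hypothesis (reference paths are low-uncertainty, autonomous generations drift upward), so it can be taken as a structural assumption. The delicate point is the lower bound: one must exclude the degenerate scenario in which the reference entropy equals the global infimum of path entropy, which would force $\lambda^*=\infty$ and break the claim that $\lambda^*\in(0,\infty)$. A clean way to handle this is to assume (or argue from the nondegeneracy of $\Pr$) that $\mu_{\Pr} > \inf_\tau \entropyPath(\tau)$, i.e., the reference distribution averages over genuinely diverse paths rather than collapsing onto the single least-uncertain path; under this mild regularity condition the bracketing is strict and the argument closes. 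A secondary technical obstacle is making the idealized exponential-tilt representation $\widehat{\Pr}_\lambda$ faithful to the actual discrete, periodic-resampling \ESMC dynamics of Algorithm~\ref{alg:esmc}; I would address this by noting that the sequential resampling with potentials $\exp(-\lambda\entropyState)$ implements a Feynman–Kac measure whose marginal weighting matches the path tilt up to the resampling schedule, so the monotonicity and continuity structure is preserved.
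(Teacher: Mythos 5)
Your proposal is correct and follows essentially the same route as the paper's proof: both establish continuity and strict monotonicity of $\mu_{\widehat{\Pr}}(\lambda)$ via the identity that its derivative equals the negative variance of the entropy under the softmax/exponentially tilted weights, then bracket $\mu_{\Pr}$ between the $\lambda = 0$ and $\lambda \to \infty$ limits and conclude by the intermediate value theorem plus strict monotonicity. If anything, you are more explicit than the paper about the two assumptions the argument needs --- that $\mu_{\Pr}$ strictly exceeds the minimal attainable path entropy (so that $\lambda^*$ is finite), and that the single-resampling-step softmax analysis faithfully represents the full periodic-resampling dynamics --- both of which the paper simply asserts (the former as ``Model Collapse,'' the latter by silently identifying the one-step expectation $\tilde{H}(\lambda)$ with $\mu_{\widehat{\Pr}}(\lambda)$).
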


\begin{proof}
    Consider a set of $M$ particles with state entropies $\{H_1, \dots, H_M\}$ before evaluation, the $m$-th particle is retained with probability $\pi_m(\lambda) = \frac{\exp(-\lambda H_m)}{\sum_{j=1}^M \exp(-\lambda H_j)}$ in the \ESMC resampling step.
    The expectated entropy after resampling is given by the function $\tilde{H}: [0, \infty) \to \mathbb{R}$:
    \begin{equation}
    \tilde{H}(\lambda) = \sum_{m=1}^M \pi_m(\lambda) H_m = \frac{\sum_{m=1}^M H_m \exp(-\lambda H_m)}{\sum_{j=1}^M \exp(-\lambda H_j)}\, .
    \end{equation}
    \textbf{Continuity. }
    Since the denominator $\sum_{j=1}^M \exp(-\lambda H_j)$ is a sum of strictly positive terms for all $\lambda \in \mathbb{R}$, $\tilde{H}(\lambda)$ is continuous and differentiable on $[0, \infty)$.
    
    \textbf{Monotonicity. }
    Since $\frac{d}{d\lambda} e^{-\lambda H} = -H e^{-\lambda H}$, we obtain:
    \begin{equation}
    \frac{d}{d\lambda} \tilde{H}(\lambda) = - \left( \sum_{m=1}^M \pi_m(\lambda) H_m^2 - \left( \sum_{m=1}^M \pi_m(\lambda) H_m \right)^2 \right) = - \text{Var}{\pi(\lambda)}(H) .
    \end{equation}
    Since the variance$ \text{Var}{\pi(\lambda)}(H)$ is non-negative, the derivative $\frac{d}{d\lambda}$ is non-positive. 
    Assuming the particle entropies $\{H_m\}$ are not all identical, the variance is strictly positive, implying $\tilde{H}(\lambda)$ is strictly decreasing with respect to $\lambda$.
    \begin{itemize}[nosep, leftmargin=12pt]
    \item \textbf{Case $\lambda = 0$:} The weights become uniform with $\pi_m(0) = 1/M$. The output distribution is consistent with the original model. Given the Entropy Drift phenomenon, we have $\mu_{\widehat{\Pr}} > \mu_{\Pr}$ when $\lambda=0$.
    \item \textbf{Case $\lambda \to \infty$:} The retaining probability of resampling steps concentrates on the particle with the minimum entropy, that is,
    $$\lim_{\lambda \to \infty} \pi_m(\lambda) = 
    \begin{cases}
    1, & \text{if } H_m = \min_i H_i\,, \\
    0, & \text{otherwise}\,.
    \end{cases}$$
    This corresponds to Model Collapse, where $\lim_{\lambda \to \infty} \mu_{\widehat{Pr}}(\lambda) < \mu_{\Pr}$.
    \end{itemize}
    The function $\mu_{\widehat{\Pr}}(\lambda)$ is continuous and strictly monotonically decreasing on the domain $[0, \infty)$. The reference entropy $\mu_{\Pr}$ lies strictly between the boundary values $\lim_{\lambda \to \infty} \mu_{\widehat{\Pr}}(\lambda)$ and $\mu_{\widehat{\Pr}}(0)$. By the \textbf{Intermediate Value Theorem}, there exists a $\lambda^*$ such that $\mu_{\widehat{\Pr}}(\lambda^*) = \mu_{\Pr}$. Furthermore, due to the strict monotonicity of the function, this $\lambda^*$ is unique.
\end{proof}

\textbf{Remark.}
While Theorem~\ref{thm:exist_unique} establishes $\lambda$ as a primary control for entropy, the resampling interval $\Delta i_r$ also governs the final uncertainty. Since the autonomous generative process inherently exhibits \textit{Entropy Drift}, uncertainty accumulates between resampling steps. Consequently, a larger $\Delta i_r$ allows greater accumulation of drift before resampling occurs, leading to a higher expectation of entropy. 
Thus, $\lambda$ (resampling strength) and $\Delta i_r$ (resampling frequency) jointly determine the accumulated entropy of the generation.

\subsection{Properties of State Entropy}
\label{app:other_analysis}

We present theoretical analysis to verify the validity of the state entropy $ \entropyState(\mathbf{z}_t)$. We begin by verifying its asymptotic behavior at the diffusion process boundaries. Then, we establish its monotonicity with respect to context information.

\begin{theorem}[Asymptotic Behavior of State Entropy]
\label{thm:asymptotic_behavior}
Let $\hat{\mathbf{p}}_\theta(\cdot | \mathbf{z}_t, t)$ be a denoising model that approximates the true posterior distribution $q(\mathbf{z}_t|x_0)$. Assume the following limits exist, then the expected state entropy $\mathbb{E}[ \entropyState(\mathbf{z}_t)]$ satisfies:

\begin{itemize}[nosep, leftmargin=12pt]
    \item \textbf{Fully-Noised State ($t \to 1$):} As $t \to 1$, $\alpha_t \to 0$ and $\mathbf{z}_1$ becomes independent of the original sequence $\mathbf{x}_0$. The entropy converges to the marginal distribution of tokens in the training data, $p_{\text{data}}$
    \begin{equation}
        \lim_{t \to 1} \mathbb{E}_{ \mathbf{z}_t \sim q(\mathbf{z}_t | \mathbf{x}_0)}[ \entropyState(\mathbf{z}_t)] = H(p_{\text{data}})\,.
    \end{equation}
    
    \item \textbf{Noise-Free State ($t \to 0$):} As $t \to 0$, $\alpha_t \to 1$ and the input $z_0$ is the clean data $x_0$. The model performs reconstruction, and the entropy converges to 0:
    \begin{equation}
        \lim_{t \to 0} \mathbb{E}_{\mathbf{z}_t \sim q(\mathbf{z}_t | \mathbf{x}_0)}[ \entropyState(\mathbf{z}_t)] = 0\,.
    \end{equation}
\end{itemize}
\end{theorem}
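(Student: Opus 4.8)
The plan is to treat the two boundary limits separately and, in each case, reduce the claim about $\mathbb{E}_{\mathbf{z}_t \sim q(\mathbf{z}_t|\mathbf{x}_0)}[\entropyState(\mathbf{z}_t)]$ to the limiting behavior of a single marginal predictive distribution $p_\theta(X_0^\ell | \mathbf{z}_t, t)$ at a masked position. Since $\entropyState$ is a finite average of the continuous functional $H(\cdot)$ over $\mathcal{M}_t$, it suffices to identify the limit of $p_\theta(X_0^\ell | \mathbf{z}_t, t)$ as a point of the simplex $\Delta^{K}$, invoke continuity of Shannon entropy on $\Delta^{K}$ to pass to $H$ of the limit, and then move the limit through the bounded average and the outer expectation by dominated convergence, using the uniform bound $0 \le H(\cdot) \le \log K$.

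First I would treat the fully-noised limit $t \to 1$. Because the per-token masking probability is $1 - \alpha_t \to 1$, the event that every coordinate of $\mathbf{z}_t$ is \texttt{[MASK]} has probability tending to one, and on this event $\mathbf{z}_t$ carries no information about $\mathbf{x}_0$. Invoking the hypothesis that the model approximates the true posterior, the predictive distribution at a masked position converges to the information-free conditional, namely the token marginal $q(X_0^\ell) = p_{\text{data}}$. Hence $H(p_\theta(X_0^\ell|\mathbf{z}_t,t)) \to H(p_{\text{data}})$ for each masked $\ell$; since every term of the defining average shares this same limit, both the average and then the expectation converge to $H(p_{\text{data}})$. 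The residual, vanishing-probability event of a partially unmasked $\mathbf{z}_t$ contributes negligibly because $H$ is uniformly bounded by $\log K$.

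Next I would treat the noise-free limit $t \to 0$. Here $1 - \alpha_t \to 0$, so conditioned on the validity requirement $|\mathcal{M}_t| > 0$ a masked position $\ell$ is surrounded by an asymptotically complete clean context $\{x_0^{\ell'}\}_{\ell' \ne \ell}$. Under the approximate-posterior hypothesis, the predictive distribution converges to the reconstruction posterior given this full clean context, which concentrates on the ground-truth token, i.e. $p_\theta(X_0^\ell|\mathbf{z}_t,t) \to \delta_{x_0^\ell}$. Since the Shannon entropy of a point mass is $0$ and the average runs over nonnegative bounded terms, $\mathbb{E}[\entropyState(\mathbf{z}_t)] \to 0$.

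The main obstacle is the two convergence claims for the predictive distribution itself --- to $p_{\text{data}}$ as $t \to 1$ and to $\delta_{x_0^\ell}$ as $t \to 0$ --- since these are the only places the modeling assumption enters. I would make them rigorous by first establishing that the true denoising posterior $q(X_0^\ell|\mathbf{z}_t)$ has exactly these endpoint limits, which follows from the forward process becoming information-destroying at $t \to 1$ and identity-like at $t \to 0$, and then arguing that the model's approximation error in total variation vanishes in the limit, so that the Audenaert--Fannes continuity bound (already used in the proof of Proposition~\ref{prop:proposition_2}) forces the entropy gap to close. The remaining bookkeeping --- interchanging the finite average, the outer expectation, and the limit --- is routine given $0 \le H(\cdot) \le \log K$ and continuity of $H$ on $\Delta^{K}$, so the crux is the endpoint characterization of the posterior rather than any delicate estimate.
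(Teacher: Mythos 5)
Your proposal follows essentially the same route as the paper's proof: at $t\to 1$ the forward marginal degenerates to the all-mask state so the optimal predictor reverts to the token marginal $p_{\text{data}}$, and at $t\to 0$ the near-complete clean context forces the predictor to a point mass, with the boundedness $0\le H(\cdot)\le\log K$ used to pass limits through the average and expectation. Your additional remarks on handling the vanishing-probability partially-masked events and on closing the entropy gap via a total-variation/Audenaert--Fannes argument are slightly more careful than the paper's appeal to an ``optimally trained'' model, but they do not change the structure of the argument.
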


\begin{proof}
The proof examines the two temporal boundaries of the diffusion process.

\paragraph{Asymptotic behavior as $t \to 1$.}
As $t \to 1$, the noise schedule satisfies $\lim_{t \to 1} \alpha_t = 0$. The forward process marginal, $q(z_t | x_0) = \text{Cat}(z_t; \alpha_t x_0 + (1 - \alpha_t) \mathbf{m})$, converges to a categorical distribution concentrated entirely on the mask token:
\begin{equation}
    \lim_{t \to 1} q(\mathbf{z}_t | \mathbf{x}_0) = \mathrm{Cat}(\mathbf{z}_t; \mathbf{m})\,,
\end{equation}
where $\mathrm{Cat}(\cdot; \boldsymbol{\pi})$ denotes the categorical distribution with probability vector $\boldsymbol{\pi}$, and $\mathbf{m}$ is the one-hot vector representing the \texttt{[MASK]} token.
Consequently, $\mathbf{z}_1 = [\mathbf{m}, \ldots, \mathbf{m}]$ is deterministic and independent of $\mathbf{x}_0$, with all positions masked ($\mathcal{M}_1 = \{1, \ldots, L\}$). For an optimally trained denoising model, the prediction at every masked position converges to the marginal data distribution:
\begin{equation}
    \forall \ell \in \mathcal{M}_1, \quad \lim_{t \to 1} \mathbf{p}_\theta^\ell(\mathbf{z}_t, t) = p_{\mathrm{data}}\,.
\end{equation}
The state entropy therefore satisfies:
\begin{equation}
    \entropyState(\mathbf{z}_1) = \frac{1}{L} \sum_{\ell=1}^{L} H(p_{\mathrm{data}}) = H(p_{\mathrm{data}})\,.
\end{equation}
As $z_1$ is a deterministic state, the expectation is equal to the value itself. 

\paragraph{Asymptotic behavior as $t \to 0$.}
In the opposite limit $t\rightarrow 1$,  $\alpha_t \rightarrow 1$. The forward process marginal converges to the clean data:
\begin{equation}
    \lim_{t \to 0} q(\mathbf{z}_t | \mathbf{x}_0) = Cat(\mathbf{z}_t,  \mathbf{x}_0)\,, \implies z_0 = x\,.
\end{equation}
 In this regime, the model has access to nearly the entire ground-truth context. For any masked position $\ell \in \mathcal{M}_t$ where $t$ is small, an optimal model's prediction $p_\theta^\ell(x_0|z_t, t)$ is conditioned on a context that is almost entirely the ground truth, $\{ x_0^j \mid j \notin \mathcal{M}_t \}$. Formally, the predictive distribution converges to a point mass on the true token:
\begin{equation}
    \lim_{t \to 0} \hat{\mathbf{p}}_\theta^\ell(\mathbf{z}_t, t) = \delta_{k, x_0^\ell}\,.
\end{equation}
The Shannon entropy of a Dirac delta distribution is zero:
\begin{equation}
    \lim_{t \to 0} H\left( \hat{x}_\theta^\ell(z_t, t) \right) = H(\delta_{k, x_0^\ell}) = 0\,.
\end{equation}
As this holds for every masked position, the average entropy $\entropyState(\mathbf{z}_t)$ converges to zero. The convergence of the expectation follows from the deterministic nature of the limit and the boundedness of the entropy function.

This completes the verification of both asymptotic behaviors.
\end{proof}

The asymptotic analysis confirms that $ \entropyState$ behaves intuitively at process boundaries. We now establish its monotonicity with respect to context information.

\begin{lemma}[Concavity of Entropy]
\label{lemma:entropy_concavity}
The Shannon entropy $H(p)$ is a concave function on the probability simplex $\Delta^{K-1}$.
\end{lemma}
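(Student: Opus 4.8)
The plan is to reduce the concavity of $H$ to the concavity of a single scalar function applied coordinatewise. Writing $p = (p_1, \dots, p_K)$, I would express the Shannon entropy as a separable sum
\begin{equation*}
    H(p) = \sum_{k=1}^K \eta(p_k), \qquad \eta(x) := -x \log x,
\end{equation*}
adopting the standard continuous extension $\eta(0) = 0$, so that $\eta$ is defined and continuous on the whole interval $[0,1]$.

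The key step is to verify that the scalar function $\eta$ is concave. On the open interval $(0,1)$ it is twice differentiable with $\eta'(x) = -\log x - 1$ and $\eta''(x) = -1/x < 0$; a strictly negative second derivative certifies strict concavity on $(0,1)$, and continuity of $\eta$ at the endpoints extends concavity to the closed interval $[0,1]$. Since any finite sum of concave functions is concave, and $H$ is exactly such a sum of the terms $\eta(p_k)$, the map $p \mapsto H(p)$ is concave on $\R^K$; restricting to the affine set $\Delta^{K-1}$ preserves concavity, which gives the claim. An equivalent route would work directly with the Hessian of $H$ on the simplex interior: the gradient has entries $\partial H / \partial p_k = -\log p_k - 1$ and the Hessian is the diagonal matrix $\diag(-1/p_1, \dots, -1/p_K)$, which is negative definite whenever all $p_k > 0$, again yielding strict concavity in the interior with the boundary handled by continuity.

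There is no genuine obstacle here, as the statement is a textbook fact. The only point requiring a line of care is the boundary of the simplex, where $\log p_k$ is singular: this is dispatched by the convention $0 \log 0 = 0$ together with the observation that $\eta$, although its derivative blows up at $0$, remains continuous and concave on $[0,1]$, so the inequality defining concavity passes to the boundary by taking limits of the interior case.
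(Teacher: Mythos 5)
Your proof is correct and follows essentially the same route as the paper's: decompose $H(p)$ as a sum of the scalar terms $-p_k \log p_k$, check the second derivative $-1/p_k < 0$, and invoke closure of concavity under summation. Your additional care at the simplex boundary (via the convention $0\log 0 = 0$ and continuity) is a minor refinement the paper omits but does not change the argument.
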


\begin{proof}
The entropy function $H(p) = -\sum_{i=1}^K p_i \log p_i$ is a sum of terms $f(p_i) = -p_i \log p_i$. Since $f''(p_i) = -1/p_i < 0$ for $p_i > 0$, each $f(p_i)$ is strictly concave. The sum of concave functions remains concave.
\end{proof}

We now turn to investigating the context-sensitivity property of the state entropy. Let $\mathcal{O}_t \subseteq \{1, \dots, L\}$ denote the set of indices corresponding to unmasked tokens a time $t$.

\begin{theorem}[Context-Sensitivity of State Entropy]
\label{thm:context_sensitivity}
Let $\mathbf{z}_t$ and $\mathbf{z}'_t$ be latent variables with observed(unmasked) position sets $\mathcal{O}_t$ and $\mathcal{O}'_t$ respectively, where $\mathcal{O}_t \subset \mathcal{O}'_t$. For any masked position $\ell \notin \mathcal{O}'_t$, revealing additional context does not increase the prediction entropy:
\begin{equation}
\label{eq:context-sensitivity}
    \mathbb{E}[H(\hat{\mathbf{p}}_\theta^\ell(\mathbf{z}'_t, t))] \le \mathbb{E}[H(\hat{\mathbf{p}}_\theta^\ell(\mathbf{z}_t, t))]\,.
\end{equation}
\end{theorem}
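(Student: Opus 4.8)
The plan is to recognize \eqref{eq:context-sensitivity} as the classical principle that \emph{conditioning reduces entropy}, and to derive it directly from the concavity of Shannon entropy just established in Lemma~\ref{lemma:entropy_concavity}. First I set up the coupling between the two states. Write $S = \mathcal{O}'_t \setminus \mathcal{O}_t$ for the positions revealed in $\mathbf{z}'_t$ but still masked in $\mathbf{z}_t$, so that $\mathbf{z}'_t$ agrees with $\mathbf{z}_t$ on $\mathcal{O}_t$ and additionally exposes the true tokens $x_S = \{x_0^j\}_{j \in S}$. The expectations in the statement are then understood over the joint law induced by the forward process, with $x_S$ distributed according to the posterior $p(x_S \mid \mathbf{z}_t)$ of the newly revealed tokens given the smaller context.

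The first substantive step is the \emph{marginalization identity} for the predictive distribution. Invoking the hypothesis that $\hat{\mathbf{p}}_\theta$ approximates the Bayes-optimal denoiser, it inherits the law of total probability, so that the low-context posterior at position $\ell$ is a mixture of the high-context posteriors:
\begin{equation*}
    \hat{\mathbf{p}}_\theta^\ell(\mathbf{z}_t, t) = \sum_{x_S} p(x_S \mid \mathbf{z}_t)\, \hat{\mathbf{p}}_\theta^\ell(\mathbf{z}'_t, t),
\end{equation*}
where $\mathbf{z}'_t$ denotes $\mathbf{z}_t$ with the positions $S$ filled to the values $x_S$. This exhibits the low-context predictive distribution as a convex combination of high-context predictive distributions.

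I then apply Jensen's inequality using the concavity of $H$ from Lemma~\ref{lemma:entropy_concavity}:
\begin{align*}
    H\!\left(\hat{\mathbf{p}}_\theta^\ell(\mathbf{z}_t, t)\right)
    &= H\!\left(\sum_{x_S} p(x_S \mid \mathbf{z}_t)\, \hat{\mathbf{p}}_\theta^\ell(\mathbf{z}'_t, t)\right) \\
    &\ge \sum_{x_S} p(x_S \mid \mathbf{z}_t)\, H\!\left(\hat{\mathbf{p}}_\theta^\ell(\mathbf{z}'_t, t)\right)
    = \mathbb{E}_{x_S \mid \mathbf{z}_t}\!\left[ H\!\left(\hat{\mathbf{p}}_\theta^\ell(\mathbf{z}'_t, t)\right)\right].
\end{align*}
Taking the outer expectation over $\mathbf{z}_t$ and using that the coupling makes $\mathbb{E}_{\mathbf{z}_t}\mathbb{E}_{x_S \mid \mathbf{z}_t}$ coincide with the marginal over $\mathbf{z}'_t$, the right-hand side becomes $\mathbb{E}[H(\hat{\mathbf{p}}_\theta^\ell(\mathbf{z}'_t, t))]$, yielding exactly the claim. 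Equivalently, one may phrase the whole argument as $\mathbb{E}[H(\hat{\mathbf{p}}_\theta^\ell(\mathbf{z}_t,t))] = H(X_0^\ell \mid X_{\mathcal{O}_t})$ and note $H(X_0^\ell \mid X_{\mathcal{O}'_t}) \le H(X_0^\ell \mid X_{\mathcal{O}_t})$ since $\mathcal{O}_t \subset \mathcal{O}'_t$.

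I expect the main obstacle to be the marginalization identity in the first step: it holds exactly only for the true posterior, whereas a trained network need not be perfectly self-consistent across different masking patterns. I would therefore prove the result under the optimal-denoiser hypothesis already assumed in the theorem (so $\hat{\mathbf{p}}_\theta$ equals the Bayes posterior and the tower property is exact); for a merely $\epsilon$-accurate model the identity holds up to an $\mathcal{O}(\epsilon)$ correction, degrading the conclusion to $\mathbb{E}[H(\hat{\mathbf{p}}_\theta^\ell(\mathbf{z}'_t,t))] \le \mathbb{E}[H(\hat{\mathbf{p}}_\theta^\ell(\mathbf{z}_t,t))] + \mathcal{O}(\epsilon)$. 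The remaining steps — Jensen via concavity and linearity of expectation — are routine.
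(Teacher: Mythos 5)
Your proposal is correct and follows essentially the same route as the paper: both arguments identify $\hat{\mathbf{p}}_\theta^\ell$ with the Bayes posterior $p(x_0^\ell \mid \{x_0^j\}_{j\in\mathcal{O}_t})$ and reduce the claim to the fact that conditioning does not increase entropy, $H(X\mid Y,Z)\le H(X\mid Y)$. You merely unpack that classical fact explicitly via the marginalization (tower) identity and Jensen's inequality with Lemma~\ref{lemma:entropy_concavity}, and you are in fact more careful than the paper in flagging that the mixture identity is exact only for the optimal denoiser, degrading to an $\mathcal{O}(\epsilon)$ correction otherwise.
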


\begin{proof}
For an optimal denoising model, the prediction $\hat{p}_\theta^\ell(z_t,t)$ approximates the true posterior distribution $p(x_0^\ell | \{x_0^j\}_{j \in \mathcal{O}_t})$, where $\mathcal{O}_t$ is the set of observed indices in $z_t$. Let $X = x_0^\ell$ be the target token at position $\ell$, $Y = \{x_0^j \}_{j\in\mathcal{O}_t}$ be the initial context, and $Z = \{x_0^j\}_{j \in \mathcal{O}'_t \setminus \mathcal{O}_t}$ be the additional context revealed in $z'_t$.

By the information-theoretic property that conditioning reduces entropy:
\begin{equation}
    H(X | Y, Z) \le H(X | Y),
\end{equation}
the entropy of an optimal model's prediction should converge to the true conditional entropy of the data:
\begin{align}
    \mathbb{E}[H(\hat{\mathbf{p}}_\theta^\ell(\mathbf{z}_t, t))] &\to H(X | Y)\,, \\
    \mathbb{E}[H(\hat{\mathbf{p}}_\theta^\ell(\mathbf{z}'_t, t))] &\to H(X | Y, Z)\,.
\end{align}
Combining these inequalities yields the desired result Equation~\ref{eq:context-sensitivity}.
\end{proof}

Theorem~\ref{thm:context_sensitivity} demonstrates that $ \entropyState$ properly decreases with increasing context information, validating its use as a measure of uncertainty in masked diffusion models.

\section{Details of Experiments}

\subsection{E-SMC Implementation}
\label{app:potential}
This appendix provides specific formulas for the \textbf{Evaluation} step of the E-SMC algorithm, which were abstracted in the main text for clarity.

\paragraph{Reward Definition.}
The raw State Entropy, $\entropyState(\mathbf{z}) \in [0, \log_2 V]$ where $V$ is the vocabulary size, is first converted into a normalized reward signal $r(\mathbf{z}) \in [0, 1]$. This is done by inverting and scaling the entropy:
\begin{equation}
    r(\mathbf{z}) \triangleq \frac{\log_2 V - \entropyState(\mathbf{z})}{\log_2 V} = 1 - \frac{\entropyState(\mathbf{z})}{\log_2 V}.
\end{equation}
This formulation ensures that a state with minimum entropy (0) receives the maximum reward (1), and a state with maximum entropy ($\log_2 V$) receives the minimum reward (0).

\paragraph{Potential Function.}
The reward $r(\mathbf{z})$ is then used to compute the potential score $G(\mathbf{z})$ for each particle, which determines its fitness for resampling. We employ a Gibbs potential function, controlled by a temperature parameter $\lambda > 0$:
\begin{equation}
    G(\mathbf{z}) \triangleq \exp(\lambda \cdot r(\mathbf{z})).
\end{equation}
The parameter $\lambda$ controls the selection pressure. A higher $\lambda$ value more strongly favors particles with high rewards (low entropy), leading to a more aggressive pruning of undesirable trajectories.

\paragraph{Resampling Probabilities.}
The probability $\pi^{(k)}$ of selecting particle $\mathbf{z}_{t_{i-1}}^{(k)}$ during the resampling step is given by its normalized potential score:
\begin{equation}
    \pi^{(k)} = \frac{G(\mathbf{z}_{t_{i-1}}^{(k)})}{\sum_{j=1}^K G(\mathbf{z}_{t_{i-1}}^{(j)})} = \frac{\exp(\lambda \cdot r(\mathbf{z}_{t_{i-1}}^{(k)}))}{\sum_{j=1}^K \exp(\lambda \cdot r(\mathbf{z}_{t_{i-1}}^{(j)}))}.
\end{equation}
This is equivalent to applying a softmax function to the scaled rewards of all particles in the population. The new population is then formed by drawing $K$ samples with replacement from the current population according to these probabilities.

\subsection{Decoding Strategies}
\label{app:baseline_strategies}

Here we provide a detailed description of the decoding strategies we evaluate.

\paragraph{Uniform~\citep{austin2021structured}}
At each step, this sampler unmasks a fixed number of tokens at positions chosen uniformly at random from the set of currently masked tokens. This is the vanilla sampler in MDMs.
    
\paragraph{Confidence~\citep{chang2022maskgit}}
Sampler with confidence score selects tokens for positions where MDM prediction is most confident, defined as the position $\ell$ that maximizes the probability of the most likely token, i.e., $\arg\max_{\ell \in \mathcal{M}_t} \max(\hat{\mathbf{p}}_0^\ell)$.

\paragraph{Entropy~\citep{ben2025accelerated}}
Sampler with entropy selects tokens for positions where the MDM prediction is least ambiguous, defined as the position $\ell$ that minimizes the Shannon entropy of the predicted probability distribution, i.e., $\arg\min_{\ell \in \mathcal{M}_t} H(\hat{\mathbf{p}}_0^\ell)$.

\paragraph{Margin~\citep{kim2025train}}
Sampler with margin selects tokens for positions with the clearest distinction between the top two candidates, defined as the position $\ell$ that maximizes the margin $p_{(1)}^\ell - p_{(2)}^\ell$, where $p_{(1)}^\ell$ and $p_{(2)}^\ell$ are the highest and second-highest probabilities in $\hat{\mathbf{p}}_0^\ell$.

\paragraph{EB-Sampler~\citep{ben2025accelerated}}
Entropy-Bounded sampler is an adaptive method that unmasks a variable number of tokens, $k$, at each step. It selects the $k$ tokens with the lowest entropy, where $k$ is dynamically determined by ensuring the cumulative entropy of the selected tokens remains bounded by a predefined threshold $\gamma$.

\paragraph{Semi-AR~\citep{nie2025large}}
Semi-AutoRegressive sampler partitions the sequence into contiguous blocks. It generates tokens within each block in a parallel, non-autoregressive manner, while the blocks themselves are generated sequentially from left to right.

\paragraph{Fast-dLLM~\citep{wu2025fast}}
This sampler accelerates the generation of each block by adaptively unmasking a variable number of tokens at each step. The number of tokens is determined based on whether their prediction confidence exceeds a given threshold, allowing a block to be completed in fewer steps than prescheduled.

\paragraph{PC-Sampler~\citep{huang2025pc}}
Positional-Confidence sampler selects tokens based on a score that combines a confidence measure (derived from the cross-entropy against a background token distribution) with an exponentially decaying positional bias, thus prioritizing tokens that are confidently predicted and appear earlier in the sequence.

\paragraph{P2~\citep{peng2025path}}
This is a multi-stage sampler that employs a self-correction mechanism. It first generates a high-proportion draft of the sequence by filling the most confident positions. It then enters an iterative refinement phase, where in each step it identifies the \textit{least} confident generated tokens, re-masks them, and immediately re-predicts their content based on the updated context. This process allows MDM to revise and improve its initial predictions.

\subsection{Experimental Configurations}
\label{app:exp_config}

Here we provide a detailed description of the configurations for each experiment. Table~\ref{tab:exp_config} summarizes the hyperparameters used for experiments on \texttt{LLaDA} model. Table~\ref{tab:exp_config_code} summarizes the hyperparameters used for experiments on \texttt{Open-dCoder} model.

\paragraph{Note on Selection Temperature Parameter.}
\texttt{LLaDA} experiments utilize a selection temperature parameter ($T_{\text{sel}} = 0.1$) while \texttt{Open-dCoder} experiments do not require this parameter. This difference is because of the generation temperature settings and their impact on path diversity.

In \texttt{LLaDA} experiments, generation temperature is set to 0, making the token prediction process deterministic. Under such deterministic conditions, multiple runs would yield identical paths, preventing path exploration for E-BoN and E-SMC algorithms. The selection temperature introduces controlled randomness during the position selection phase of decoding strategies.

Specifically, when selecting the next $k$ positions to unmask based on confidence scores $\mathbf{s} = [s_1, s_2, \ldots, s_n]$, the selection temperature modifies the standard top-$k$ selection through stochastic sampling:

\begin{align}
\text{Top-}2k\text{ candidates:} \quad &\mathbf{s}_{\text{top}} = \text{topk}(\mathbf{s}, 2k) \\
\text{Temperature-scaled probabilities:} \quad &p_i = \frac{\exp(s_i / T_{\text{sel}})}{\sum_{j \in \text{top-}2k} \exp(s_j / T_{\text{sel}})} \\
\text{Stochastic selection:} \quad &\text{positions} \sim \text{Multinomial}(\mathbf{p}, k)
\end{align}

where $T_{\text{sel}}$ controls the randomness level: lower values favor high-confidence positions (approaching deterministic top-$k$ as $T_{\text{sel}} \to 0$), while higher values increase selection diversity.

\texttt{Open-dCoder} experiments use a generation temperature of 0.8, which introduces stochasticity in the token prediction process and the generation process already satisfies the diversity requirements for path search algorithms.

\begin{table}[h]
\centering
\small
\caption{Experimental Configurations for Different Tasks of \texttt{LLaDA} models.}
\label{tab:exp_config}
\begin{tabular}{lccccc}
\toprule
\textbf{Parameter} & \textbf{GSM8K} & \textbf{MATH500} & \textbf{GPQA} & \textbf{Countdown} & \textbf{Sudoku} \\
\midrule
\rowcolor{gray!10}
\multicolumn{6}{c}{\textbf{Task}} \\
\midrule
Few-shot examples & 4 & 4 & 5 & 3 & 5 \\
\midrule
\rowcolor{gray!10}
\multicolumn{6}{c}{\textbf{Generation}} \\
\midrule
Generation length & 256 & 1024 & 256 & 128 & 128 \\
Steps             & 256 & 1024 & 256 & 128 & 128 \\
Temperature       & 0   & 0    & 0   & 0   & 0 \\
CFG scale         & 0   & 0    & 0   & 0   & 0 \\
\midrule
\rowcolor{gray!10}
\multicolumn{6}{c}{\textbf{PC-Sampler}} \\
\midrule
$\lambda$         & 0.25 & 0.25 & 0.25 & 0.5 & 0.0 \\
$\alpha$          & 10   & 10   & 10   & 10  & 10 \\
\midrule
\rowcolor{gray!10}
\multicolumn{6}{c}{\textbf{E-BoN \& E-SMC}} \\
\midrule
Number of particles   & 5 & 5 & 5 & 5 & 5 \\
$\lambda_{\text{weight}}$ & 5.0 & 5.0 & 5.0 & 10.0 & 5.0 \\
Selection temperature & 0.1 & 0.1 & 0.1 & 0.1 & 0.1 \\
\midrule
\rowcolor{gray!10}
\multicolumn{6}{c}{\textbf{E-SMC}} \\
\midrule
Resample interval & 64 & 256 & 64 & 32 & 32 \\
\bottomrule
\end{tabular}
\end{table}

\begin{table}[h]
\centering
\small
\caption{Experimental Configurations for Different Tasks of \texttt{Open-dCoder} model.}
\label{tab:exp_config_code}
\begin{tabular}{lcccc}
\toprule
\textbf{Parameter}       & \textbf{HumanEval} & \textbf{HumanEval+} & \textbf{MBPP} & \textbf{MBPP+} \\
\midrule
\rowcolor{gray!10}
\multicolumn{5}{c}{\textbf{Generation}} \\
\midrule
Generation length & 128 & 128 & 128 & 128 \\
Steps & 128 & 128 & 128 & 128 \\
Temperature & 0.8 & 0.8 & 0.8 & 0.8 \\
\midrule
\rowcolor{gray!10}
\multicolumn{5}{c}{\textbf{E-BoN \& E-SMC}} \\
\midrule
Number of particles   & 5 & 5 & 5  & 5 \\
$\lambda_{\text{weight}}$ & 5.0 & 5.0 & 5.0 & 5.0 \\
\midrule
\rowcolor{gray!10}
\multicolumn{5}{c}{\textbf{E-SMC}} \\
\midrule
Resample interval & 32 & 32 & 32 & 32 \\
\bottomrule
\end{tabular}
\end{table}

\subsection{Results}
\label{app:exp_res}

In this section, we present supplementary experimental results that provide further details supporting the findings discussed in the main text. 

Table~\ref{app:tb1} reports the complete raw data from Figure~\ref{fig:u_vs_ppl_comparison}.

Table~\ref{app:tb_greedy} compares decoding algorithms with Greedy Search, where $c$ represents the number of candidates and $s$ represents the beam size.

Table~\ref{app:runtime_analysis} presents the runtime analysis of different decoding algorithms.

Table~\ref{app:tb2} provides the detailed numerical values underlying Figure~\ref{fig:odllm_results}.

\begin{table}[h!]
\centering
\caption{Mean $\entropyPath$, Mean Log PPL, and their Correlation at different steps.}
\label{app:tb1}
\begin{tabular}{@{}ccccc@{}}
\toprule
\textbf{Step} & \textbf{Mean $\entropyPath$} & \textbf{Std $\entropyPath$} & \textbf{Mean Log PPL} & \textbf{Correlation} \\
\midrule
16   & 6.8080 & 0.4320 & 5.731 & 0.891 \\
32   & 6.1183 & 0.3927 & 5.083 & 0.863 \\
64   & 5.7352 & 0.3887 & 4.661 & 0.852 \\
128  & 5.5242 & 0.3823 & 4.396 & 0.853 \\
256  & 5.3815 & 0.4018 & 4.176 & 0.851 \\
512  & 5.2318 & 0.3891 & 3.951 & 0.847 \\
1024 & 5.1197 & 0.3924 & 3.729 & 0.854 \\
\bottomrule
\end{tabular}
\end{table}

\begin{table}[h!]
    \centering
    \setlength{\belowcaptionskip}{-5pt}

    \definecolor{model1color}{RGB}{70,130,180}   %
    \definecolor{model2color}{RGB}{178,34,34}    %
    \definecolor{lightgray}{gray}{0.9}
    \newcommand{\dscore}[2]{\textcolor{model1color}{#1}\,/\,\textcolor{model2color}{#2}}

    \caption{
        \textbf{Comparison with Greedy Search.}
        Perplexity is reported for \textcolor{model1color}{\textbf{\texttt{GPT2-Large}}} and \textcolor{model2color}{\textbf{\texttt{Llama-3-8B}}}.
        While Greedy Search achieves lower perplexity by restricting the search space, it significantly degrades diversity compared to \ESMC.
    }
    \vspace{-0.5em}
    \label{app:tb_greedy}

    \rowcolors{2}{}{lightgray}
    \begin{tabular}{l c c}
        \toprule
        \textbf{Experiment} & \textbf{Perplexity $\downarrow$} & \textbf{Diversity $\uparrow$} \\
        \midrule
        Vanilla & \dscore{68.5}{66.9} & 5.45 \\
        \EBoN ($K=4$) & \dscore{52.3}{55.2} & 5.39 \\
        \ESMC ($K=4, \Delta i_r=128$) & \dscore{53.4}{54.8} & \textbf{5.46} \\
        \midrule
        Greedy Search ($c=2, s=1$) & \dscore{32.7}{35.9} & 5.17 \\
        Greedy Search ($c=4, s=1$) & \dscore{19.9}{24.2} & 4.83 \\
        Greedy Search ($c=8, s=1$) & \dscore{14.0}{18.7} & 4.59 \\
        Greedy Search ($c=16, s=1$) & \dscore{10.6}{15.7} & 4.40 \\
        \midrule
        Greedy Search ($c=8, s=2$) & \dscore{11.0}{15.7} & 4.42 \\
        Greedy Search ($c=8, s=4$) & \dscore{8.8}{13.7} & 4.09 \\
        Greedy Search ($c=8, s=8$) & \textbf{\dscore{7.9}{12.1}} & 3.84 \\
        \bottomrule
    \end{tabular}
\end{table}

\begin{table}[h!]
    \small
    \centering
    \setlength{\belowcaptionskip}{-5pt}

    \definecolor{lightgray}{gray}{0.9}
    
    \caption{
        \textbf{Runtime analysis comparison.}
        We report the computation time (in seconds) across different hyperparameter configurations ($S$, $K$, $\Delta i_r$) on a single NVIDIA L40 GPU with MDLM.
        Parallel implementations (\textbf{Par.}) reduce latency compared to sequential baselines (\textbf{Seq.}) as the number of particles $K$ increases.
    }
    \vspace{-0.5em}
    \label{app:runtime_analysis}

    \rowcolors{3}{}{lightgray}
    \begin{tabular}{ccc c cc cc}
        \toprule
        \multicolumn{3}{c}{\textbf{Configuration}} & \textbf{Baseline} & \multicolumn{2}{c}{\textbf{\EBoN}} & \multicolumn{2}{c}{\textbf{\ESMC}} \\
        \cmidrule(r){1-3} \cmidrule(lr){4-4} \cmidrule(lr){5-6} \cmidrule(l){7-8}
        \textbf{$S$} & \textbf{$K$} & \textbf{$\Delta i_r$} & \textbf{Vanilla} & \textbf{Seq.} & \textbf{Par.} & \textbf{Seq.} & \textbf{Par.} \\
        \midrule
        256 & 2 & 64 & 7.995 & 13.273 & 8.091 & 13.635 & 8.650 \\
        256 & 4 & 64 & 7.995 & 26.981 & 11.236 & 27.330 & 12.066 \\
        256 & 8 & 64 & 7.995 & 53.718 & 17.943 & 54.617 & 19.329 \\
        \midrule
        1024 & 2 & 256 & 20.256 & 53.848 & 32.790 & 54.652 & 34.644 \\
        1024 & 4 & 256 & 20.256 & 106.605 & 46.928 & 108.457 & 47.771 \\
        1024 & 8 & 256 & 20.256 & 213.582 & 75.134 & 215.317 & 76.474 \\
        \bottomrule
    \end{tabular}
\end{table}

\begin{table*}[!h]
\centering
\small
\caption{Comparison of Sampling Strategies with \EBoN and \ESMC on Code Generation Benchmarks with \texttt{Open-dCoder}.}
\label{app:tb2}
\begin{tabular}{l ccccc}
\toprule
\textbf{Strategies} & \textbf{HumanEval} & \textbf{HumanEval+} & \textbf{MBPP} & \textbf{MBPP+} & \textbf{Avg.↑} \\
\midrule

P2         & 19.3 & 17.0 & 16.3 & 23.7 & 19.1 \\
\rowcolor{Blue!5}
\hspace{5mm} \textit{w/ \EBoN} & \textcolor[RGB]{178,34,34}{0.4 ↑} & \textcolor[RGB]{178,34,34}{0.9 ↑} & \textcolor[RGB]{178,34,34}{0.5 ↑} & \textcolor[RGB]{178,34,34}{0.2 ↑} & \textcolor[RGB]{178,34,34}{0.5 ↑} \\
\rowcolor{Blue!5}
\hspace{5mm} \textit{w/ \ESMC} & \textcolor[RGB]{178,34,34}{0.2 ↑} & \textcolor[RGB]{178,34,34}{0.7 ↑} & \textcolor[RGB]{178,34,34}{0.4 ↑} & \textcolor[RGB]{178,34,34}{0.3 ↑} & \textcolor[RGB]{178,34,34}{0.4 ↑} \\
\midrule

Uniform    & 2.7 & 2.6 & 1.1 & 1.6 & 2.0 \\
\rowcolor{Blue!5}
\hspace{5mm} \textit{w/ \EBoN} & \textcolor[RGB]{178,34,34}{2.2 ↑} & \textcolor[RGB]{178,34,34}{1.6 ↑} & \textcolor[RGB]{178,34,34}{1.3 ↑} & \textcolor[RGB]{178,34,34}{1.5 ↑} & \textcolor[RGB]{178,34,34}{1.7 ↑} \\
\rowcolor{Blue!5}
\hspace{5mm} \textit{w/ \ESMC} & \textcolor[RGB]{178,34,34}{1.8 ↑} & \textcolor[RGB]{178,34,34}{1.3 ↑} & \textcolor[RGB]{178,34,34}{1.5 ↑} & \textcolor[RGB]{178,34,34}{1.3 ↑} & \textcolor[RGB]{178,34,34}{1.5 ↑} \\
\midrule

Confidence & 7.0 & 6.3 & 2.9 & 5.9 & 5.5 \\
\rowcolor{Blue!5}
\hspace{5mm} \textit{w/ \EBoN} & \textcolor[RGB]{178,34,34}{3.4 ↑} & \textcolor[RGB]{178,34,34}{2.7 ↑} & \textcolor[RGB]{178,34,34}{2.2 ↑} & \textcolor[RGB]{178,34,34}{3.8 ↑} & \textcolor[RGB]{178,34,34}{3.1 ↑} \\
\rowcolor{Blue!5}
\hspace{5mm} \textit{w/ \ESMC} & \textcolor[RGB]{178,34,34}{3.1 ↑} & \textcolor[RGB]{178,34,34}{2.4 ↑} & \textcolor[RGB]{178,34,34}{2.1 ↑} & \textcolor[RGB]{178,34,34}{4.0 ↑} & \textcolor[RGB]{178,34,34}{2.9 ↑} \\
\midrule

Entropy    & 7.6 & 6.4 & 6.1 & 7.5 & 6.9 \\
\rowcolor{Blue!5}
\hspace{5mm} \textit{w/ \EBoN} & \textcolor[RGB]{178,34,34}{3.4 ↑} & \textcolor[RGB]{178,34,34}{3.8 ↑} & \textcolor[RGB]{178,34,34}{2.7 ↑} & \textcolor[RGB]{178,34,34}{2.4 ↑} & \textcolor[RGB]{178,34,34}{3.1 ↑} \\
\rowcolor{Blue!5}
\hspace{5mm} \textit{w/ \ESMC} & \textcolor[RGB]{178,34,34}{3.7 ↑} & \textcolor[RGB]{178,34,34}{4.2 ↑} & \textcolor[RGB]{178,34,34}{3.0 ↑} & \textcolor[RGB]{178,34,34}{2.6 ↑} & \textcolor[RGB]{178,34,34}{3.4 ↑} \\
\midrule

Margin     & 4.3 & 3.5 & 2.2 & 3.9 & 3.5 \\
\rowcolor{Blue!5}
\hspace{5mm} \textit{w/ \EBoN} & \textcolor[RGB]{178,34,34}{3.7 ↑} & \textcolor[RGB]{178,34,34}{3.3 ↑} & \textcolor[RGB]{178,34,34}{2.1 ↑} & \textcolor[RGB]{178,34,34}{2.3 ↑} & \textcolor[RGB]{178,34,34}{2.8 ↑} \\
\rowcolor{Blue!5}
\hspace{5mm} \textit{w/ \ESMC} & \textcolor[RGB]{178,34,34}{3.3 ↑} & \textcolor[RGB]{178,34,34}{3.2 ↑} & \textcolor[RGB]{178,34,34}{2.2 ↑} & \textcolor[RGB]{178,34,34}{2.0 ↑} & \textcolor[RGB]{178,34,34}{2.7 ↑} \\
\bottomrule
\end{tabular}
\end{table*}

\begin{figure}[!t]
    \centering
    \includegraphics[width=1.0\textwidth]{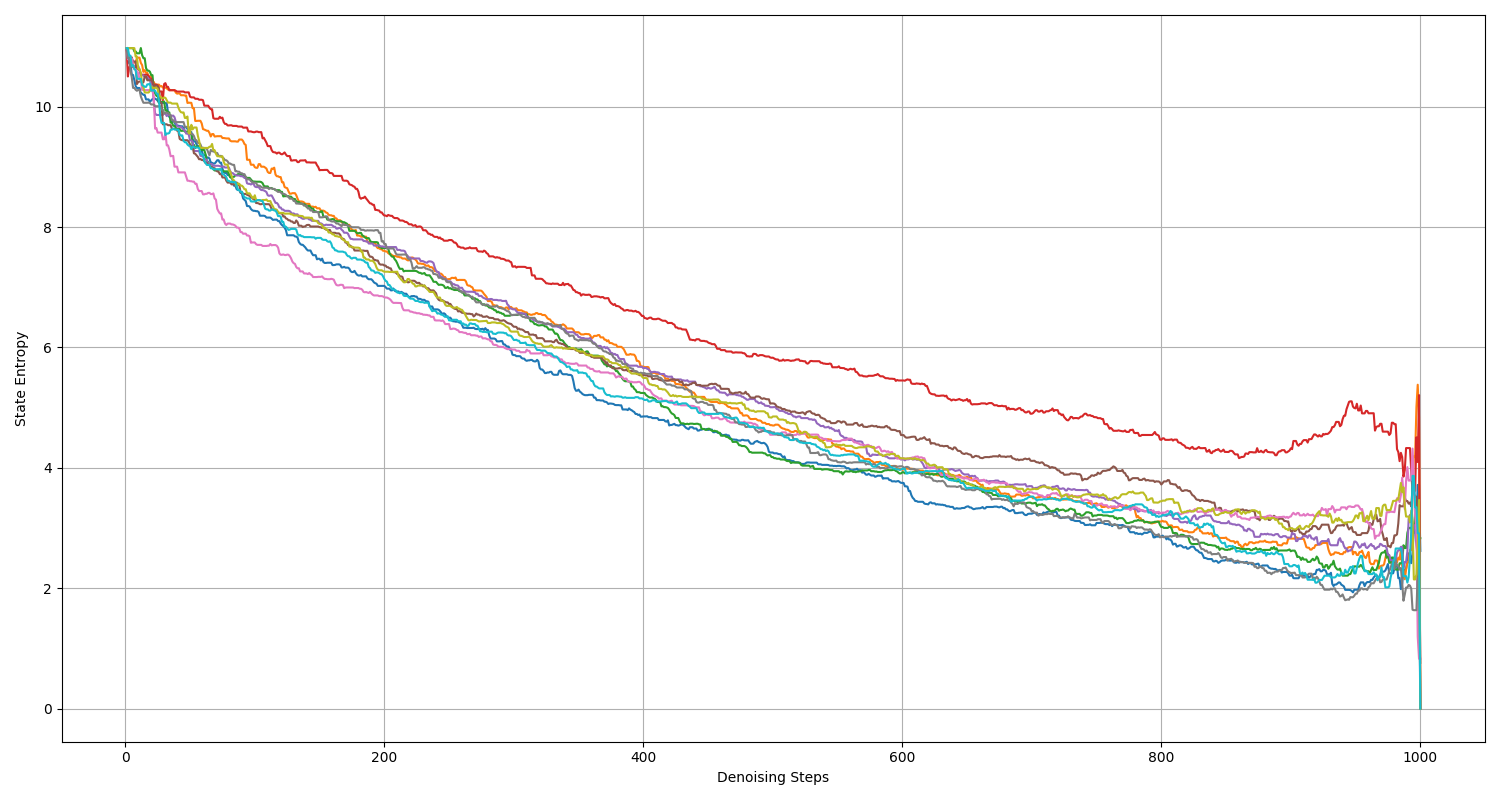}
    \caption{
        Paths of state entropy over denoising steps for 8 particles, where different colors represent distinct particles. It visually illustrates how the state entropy evolves across each particle throughout the decoding process.
    }
    \label{app:path}
\end{figure}

Figure~\ref{app:path} visually illustrates the evolution of state entropy across different particles throughout the decoding process.

\end{document}